


\documentclass[final,5p,times,twocolumn]{elsarticle}





\usepackage{amssymb,amsthm}
\usepackage{bm,mathtools}
\theoremstyle{theorem}
\newtheorem{thm}{Theorem}
\newtheorem{prop}[thm]{Proposition}
\newtheorem{cor}[thm]{Corollary}

\newtheorem{ass}{Assumption}
\newtheorem*{cor*}{Corollary}
\newtheorem*{ass*}{Assumption}
\theoremstyle{definition}
\newtheorem*{rem*}{Remark}

\newtheoremstyle{named}{}{}{\itshape}{}{\bfseries}{.}{.5em}{\thmnote{#3}}
\theoremstyle{named}
\newtheorem*{namedtheorem}{Theorem}

\newcommand{\Vol}{\operatorname{Vol}}

\newcommand{\D}{\mathcal{D}}

\usepackage{color,xcolor,colortbl}
\definecolor{finesky}{HTML}{E6F5F0}
\definecolor{lightgray}{HTML}{ECECEC}
\definecolor{lightcyan}{rgb}{0.88,1,1}

\usepackage{caption}
\usepackage{subcaption}
\usepackage{graphicx}
\usepackage{booktabs}

\usepackage{multirow}
\usepackage{booktabs}
\usepackage{boldline,makecell}

\usepackage{hyperref}


\journal{Pattern Recognition}

\begin{document}

\begin{frontmatter}



\title{Understanding Open-Set Recognition by Jacobian Norm and Inter-Class Separation}

\author[orgy]{Jaewoo Park}
\ead{julypraise@yonsei.ac.kr}
\author[orgy]{Hojin Park}
\ead{2014142100@yonsei.ac.kr}
\author[orgw]{Eunju Jeong}
\ead{eunju.jeong@woowahan.com}
\author[orgy]{Andrew Beng Jin Teoh}
\ead{bjteoh@yonsei.ac.kr}

%
%

\address[orgy]{Electrical and Electronic Engineering Department, Yonsei University, Seoul, South Korea}
\address[orgw]{Woowa Brothers Corp., Seoul, South Korea}

%

\begin{abstract}
The findings on open-set recognition (OSR) show that models trained on classification datasets are capable of detecting unknown classes not encountered during the training process. Specifically, after training, the learned representations of known classes dissociate from the representations of the unknown class, facilitating OSR. In this paper, we investigate this emergent phenomenon by examining the relationship between the Jacobian norm of representations and the inter/intra-class learning dynamics. We provide a theoretical analysis, demonstrating that intra-class learning reduces the Jacobian norm for known class samples, while inter-class learning increases the Jacobian norm for unknown samples, even in the absence of direct exposure to any unknown sample. Overall, the discrepancy in the Jacobian norm between the known and unknown classes enables OSR. Based on this insight, which highlights the pivotal role of inter-class learning, we devise a marginal one-vs-rest (m-OvR) loss function that promotes strong inter-class separation. To further improve OSR performance, we integrate the m-OvR loss with additional strategies that maximize the Jacobian norm disparity. We present comprehensive experimental results that support our theoretical observations and demonstrate the efficacy of our proposed OSR approach.
\end{abstract}




\begin{keyword}


Open-Set Recognition, Representation Learning, Metric-Learning, Object Classification.
\end{keyword}

\end{frontmatter}


\section{Introduction}
\label{sec:intro}

In recent years, deep neural network (DNN) based models have demonstrated remarkable success in \textit{closed-set recognition}, where the train and test sets share the same categorical classes to classify.
In practical environments, however, a deployed model can encounter instances of
class categories \textit{unknown} during its training. Detecting
these unknown class instances is crucial in safety-critical applications such
as autonomous driving and cybersecurity. A solution to this is \textit{open-set
recognition} (OSR), where a classifier trained over $K$ known classes
can classify them and reject unknown class instances in the test
stage~\cite{hendrycks2016baseline}.

A predominant approach in DNN-based OSR is to train a discriminative model over known classes with a metric-learning loss, and derive a score (or decision) function that captures the difference between the known and unknown in terms of their representations. For the score function to work effectively, the unknown class must be dissociated from the known class in the representation space. Interestingly,
\cite{hendrycks2016baseline} along with subsequent
works~\cite{lee2018simple,liang2017enhancing,geng2020recent} observed that
training \textit{over known classes alone}  
results in this separation;
the model separates the unknown class from the known classes even though the model did not utilize any unknown class instance during its training.
 
\begin{figure}[!t]
\centering 
\includegraphics[width=.8\linewidth]{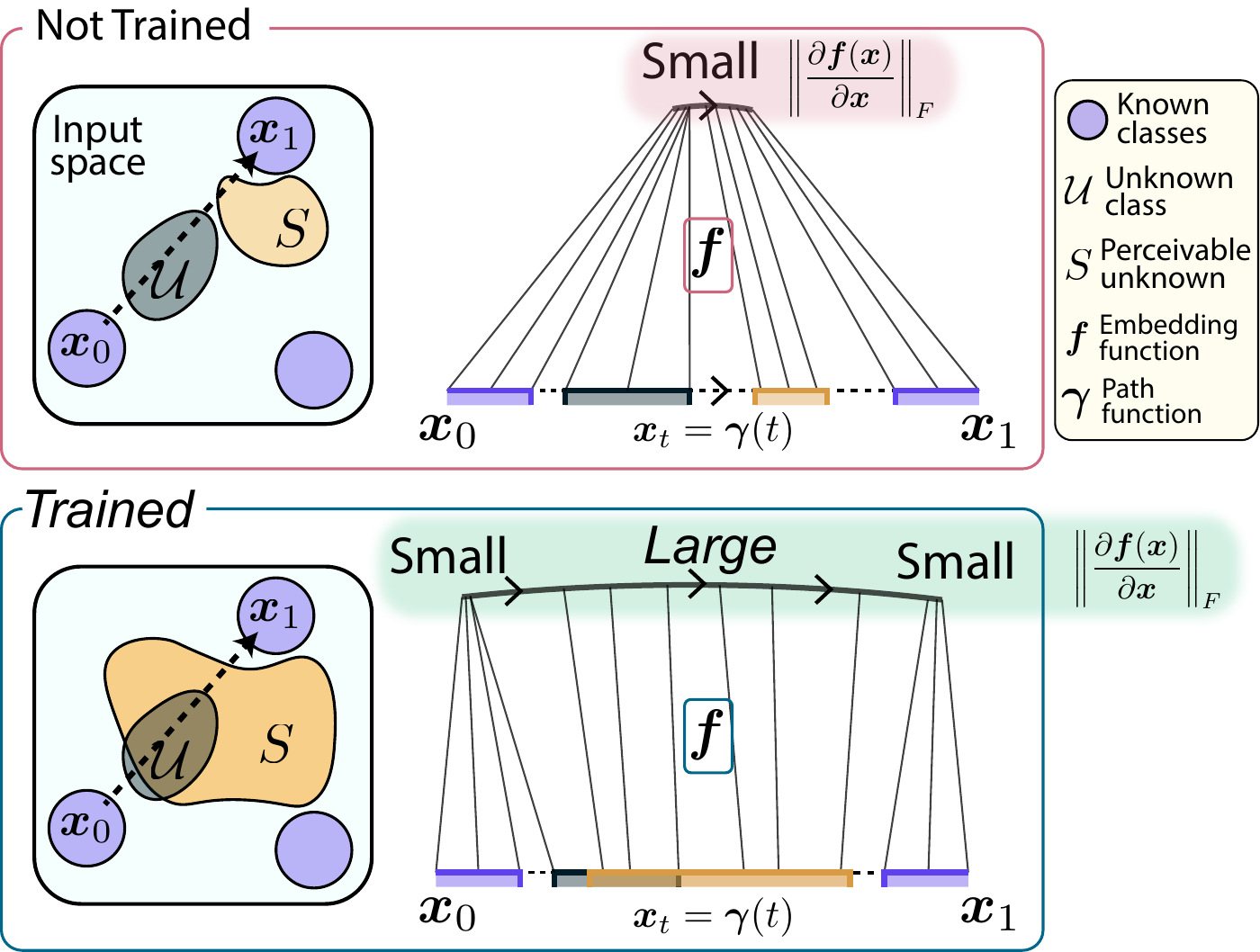}
\caption[
The main understanding of discriminative representation for open-set recognition
]{
During the closed-set metric learning, the model learns only over the known classes $C_k$, but the learning also changes the representation of \textit{unknown class}. We ask why. We discover that the intra-class learning diminishes the Jacobian norm of known class representations, while the inter-class learning increases the Jacobian norm of the unknown. The resulting disparity in Jacobian norm separates the unknown from the known.
}
\label{fig:concept}
\end{figure}

However, the underlying mechanism of this phenomenon has rarely been explored in the context of representation learning.
This work aims to analyze this phenomenon, namely, how the closed-set metric learning separates the unknown class from the known classes in the representation space.


To this end, we analyze the Jacobian norm of representation $\lVert \frac{\partial \boldsymbol{f} (\boldsymbol{x})}{\partial \boldsymbol{x}} \rVert_F$, which is the Frobenius norm of the Jacobian matrix.
We discover that inter-class separation learning within known classes plays a crucial role in OSR, as it alters the representations of \textit{unknown} class instances without direct exposure to them. Specifically, inter-class learning elevates the Jacobian norm of the unknown, whereas intra-class learning diminishes the Jacobian norm of the known. This resulting disparity between the known and unknown in terms of Jacobian norm leads to a differentiation between their respective representations.

We provide comprehensive theoretical validation for our hypothesis, which is further reinforced by a wealth of empirical evidence. Additionally, inspired by the integral role of inter-class learning in segregating unknown class instances, we develop a marginal one-vs-rest (m-OvR) loss function designed to foster substantial inter-class separation. Furthermore, we incorporate the model loss with auxiliary techniques to enhance the Jacobian norm disparity, ultimately strengthening the distinction between known and unknown classes.

The contributions of our works are summarized as follows:

\begin{enumerate}
\item We theoretically show that the closed-set metric learning separates the representations of unknown class from those of the known classes by making their Jacobian norm different. In particular, we discover that the inter-class learning is the key factor in this process as it alters the unknown class instances' representations without directly accessing them.

\item We empirically validate our theory, observing that the Jacobian norm difference between the known and unknown classes is strongly correlated to the unknown class detection performance.

\item Based on the integral role of inter-class learning for the unknown class segregation, we devise a marginal one-vs-rest (m-OvR) loss that can induce strong inter-class separation within the known classes. We further integrate the model loss with auxiliary techniques that can enhance the unknown class segregation via the Jacobian norm difference.
\end{enumerate}

We highlight that our primary objective is not to advance the state-of-the-art in the field. Rather, our foremost contribution lies in providing a theoretical elucidation of how a model gains awareness of the unknown through closed-set metric learning. Additional contributions encompass the empirical validation of our theoretical framework, as well as an examination of prevalent deep learning methodologies within the context of our proposed theory.

To the best of our knowledge, this is the inaugural study to investigate open set recognition (OSR) representations in relation to their Jacobian norm.

\section{Related Works}

\subsection{Theoretical/empirical works on OSR.}
Recent theoretical works \cite{meinke2019towards,meinke2021provably} tackle OSR with theoretical guarantees on the performance but with specific distributional modeling assumptions (e.g.~Gaussian mixture). \cite{fang2021learning} conduct theoretical studies in a more general setting by extending the classical closed-set PAC framework \cite{valiant1984theory} to open-set environments, deriving analytical bounds of the generalization error in the context of OSR. \cite{liu2018open} relates OSR to transfer learning and interprets the unknown class samples as covariate shifts. This enables the substitution of theoretical bounds derived in the transfer learning setting \cite{ben2007analysis} to open-set environments.
On the other hand, \cite{dhamija2018reducing} observes that for a model trained only with known class samples, the magnitudes of representation vectors tend to exhibit relatively larger values over the known class than over the unknown ones. 
\cite{dietterich2022familiarity} empirically proved that the standard discriminative models detect unknown classes mainly based on their unfamiliar features rather than based on the novelty of unknown category.

\subsection{OSR Methods}
For a general, broad survey on OSR models, the readers are recommended to \cite{geng2020recent,yang2021generalized}. Here, we focus on reviewing state-of-the-art OSR models, mainly focusing on discriminative ones. 

The basic baseline model \cite{neal2018open,hendrycks2016baseline,DBLP:journals/corr/abs-2110-06207} trained by softmax cross-entropy loss is known to perform both closed-set classification and unknown class detection reasonably effectively.
To enhance its unknown detection mechanism, 
OpenMax \cite{bendale2016towards} applied probabilistic modification on the softmax activation based on extreme value theory. DOC \cite{shu2017doc} replaced the softmax cross entropy with the one-vs-rest logistic regression, finding its effectiveness on invalid topic rejection in natural language.
RPL \cite{chen2020learning} proposed to maximize inter-class separation in the form of reciprocal, followed by a variant \cite{chen2021adversarial} that utilizes synthetic, adversarially generated unknown class. CPN \cite{yang2020convolutional} learns embedding metrics 
by modeling each known class as a group of multiple prototypes.
PROSER \cite{zhou2021learning} leverages latent mixup samples \cite{zhang2017mixup,verma2019manifold} as a generated unknown class and places their representations near the known class representations. 
\cite{jang2022collective}, on the other hand, proposed a collection of multiple one-vs-rest networks to mitigate the over-confidence and poor generalization issue, and utilizes a collective decision score for effective OSR. 

Recently, \cite{DBLP:journals/corr/abs-2110-06207} demonstrated that the basic SCE baseline could outperform all other OSR baselines if the SCE model is trained with strong data augmentation and utilizes state-of-the-art optimization techniques. On the other hand, \cite{kasarla2022maximum} showed that a prior on well separated discriminative embedding is still critical for effective open-set recognition.

\subsection{Jacobian Norm in Deep Discriminative Models}
Within the domain of discriminative learning, though not explicitly in the context of OSR, the Jacobian of the representation function has been examined in closed-set settings within various contexts. \cite{sokolic2017robust,novak2018sensitivity} demonstrate that the explicit minimization of the Frobenius norm of the Jacobian of classification prediction output, specifically the softmax output and logit, promotes a smoothness prior on it, subsequently enhancing the generalization of recognition in closed-set scenarios.
Nevertheless, the explicit computation of the Jacobian demands substantial computational resources. To address this, \cite{varga2017gradient} has introduced an efficient method of computing the Jacobian norm via its random projection, serving as an unbiased estimator of the raw Jacobian norm.

\cite{novak2018sensitivity,jakubovitz2018improving,hoffman2019robust} have noted that
the smoothness prior, as enforced by Jacobian norm penalization, reduces the sensitivity of the network output to minute input perturbations, thereby making the network robust against adversarial examples.
On a theoretical level, \cite{zhang2018three} has identified a close connection between weight decay and the Jacobian norm, establishing that under ideal conditions, a gradient update with weight decay equates to penalizing the Frobenius norm of the Jacobian matrix of representation.

However, all preceding studies on Jacobian analysis have been confined to the context of closed-set learning. To the best of our knowledge, our research represents the first instance of analyzing the Jacobian norm within an open-set scenario, providing a rigorous examination of its relationship to the unknown class.

\section{Theory: Understanding the Separation of Unknown Class Representations via Jacobian Norm}
\label{sec:theory}

We theoretically demonstrate that training a discriminative model over known classes separate the representations of known classes from those of the unknown class by decreasing Jacobian norm over the known classes while increasing the Jacobian norm over the unknown class (Cor. \ref{cor:sep} in Sec. \ref{sec:theory_derivation}). The limitation of Jacobian norm theory is given in Sec. \ref{sec:theory_limit}. Our observation is summarized in Sec. \ref{sec:theory_summary} with its depiction in Fig. \ref{fig:summary_theory}.

\subsection{Problem Setup and Notation} 
During closed-set metric learning, 
the representation embedding function $\boldsymbol{f}: \mathbb{R}^d \to \mathbb{R}^{d_z}$ of a discriminative model is trained to minimize \textit{intra-class distances} $\D(\boldsymbol{f}(\boldsymbol{x}), \boldsymbol{w}_{y})$ and maximize \textit{inter-class distances} $\D(\boldsymbol{f}(\boldsymbol{x}), \boldsymbol{f}(\boldsymbol{x}'))$ 
for known class samples $\boldsymbol{x}$ and $\boldsymbol{x}'$ paired with different class labels $y$ and $y'$ ($y \neq y'$). The prototype vector $\boldsymbol{w}_y {\in} \mathbb{R}^{d_z}$ is a proxy  for the $y$-th known class $C_y$, and is formulated as a learnable parameter. The known set $\mathcal{K} {=} \cup_{k=1}^K C_k$ consists of $K$ disjoint disconnected known classes $C_k$. The train samples $\boldsymbol{x}$ and $\boldsymbol{x}'$ are sampled from the known set, while the labels $y$ and $y'$ from the corresponding label space $\mathcal{Y}_\mathcal{K} = \{1,\dots, K\}$. The open set $\mathcal{O} :=\mathcal{X} \setminus \mathcal{K}$ is the complement of the known set in the (bounded) global space $\mathcal{X} = [-1,1]^d \subseteq \mathbb{R}^{d}$. The unknown class $\mathcal{U}$ we consider is a \textit{proper} subset of the open set $\mathcal{O}$. Since our task is not to discriminate within the unknown class, we treat the unknown class as a single class, although it may consist of a diverse type of object. 

During training, the model has no access to the unknown class $\mathcal{U}$, and is trained only with the $K$ number of known classes to discriminate them. After training, the OSR model should
not only discriminate each class in the known set but also need to differentiate the unknown from the known. Hence, the unknown class should be separated from all known classes in the representations space such that $\boldsymbol{f}(\mathcal{K}) \cap \boldsymbol{f}(\mathcal{K}) = \varnothing$.

\subsection{Derivation of the Theory}
\label{sec:theory_derivation}

We prove our theoretical claims by observing how the embedding function $\boldsymbol{f}$ changes on a class interpolating path (i.e., a path 
$\boldsymbol{\gamma} {:} [0,1] {\to} \mathcal{X}$ that interpolates two different known classes $C_i$ and $C_j$ by traversing $t$ from $0$ to $1$ with $\boldsymbol{x}_0 {\in} C_i$ and $\boldsymbol{x}_1 {\in} C_j$ as depicted in Fig.~\ref{fig:concept}). The detailed assumptions and full proofs to the theoretical statements are given in  \ref{sec:assumption_theory} and \ref{sec:proof_theory}, respectively.

Firstly, we show that,
during the closed-set supervision, the intra-class distance minimization minimizes the length of the projected path over the known class:

\begin{prop}
\label{prop:collapse}
Minimizing intra-class distances $\D(\boldsymbol{f}(\boldsymbol{x}), \boldsymbol{w}_k)$ to $0$ for all $\boldsymbol{x} \in C_k$ minimizes the length of the projected path $\boldsymbol{f}(\boldsymbol{\gamma}([0,1]) \cap C_k)$ for an arbitrary path $\gamma$ from $C_k$.
\end{prop}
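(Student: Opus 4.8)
The plan is to reduce the proposition to the elementary fact that a curve collapsing to a point has vanishing length, using the freedom — built into the paper's notion of a quantity ``decreasing'' — to choose the collapsing sequence of embeddings conveniently.

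First I would fix notation: let $I := \boldsymbol{\gamma}^{-1}(C_k) \subseteq [0,1]$. Since $C_k$ is a compact (hence closed) manifold by Assumption~\ref{ass:regular_class}a and $\boldsymbol{\gamma}$ is continuous, $I$ is closed, and it is nonempty because the path emanates from $C_k$, i.e.\ $\boldsymbol{\gamma}(0) = \boldsymbol{x}_0 \in C_k$. The quantity at stake is
\begin{equation}
\label{eq:pp1-len}
L(\boldsymbol{f}) := \operatorname{length}\bigl(\boldsymbol{f}(\boldsymbol{\gamma}([0,1]) \cap C_k)\bigr) = \int_{I} \Bigl\lVert \tfrac{\partial (\boldsymbol{f}\circ\boldsymbol{\gamma})}{\partial t}(t) \Bigr\rVert_2 \, dt \ge 0 ,
\end{equation}
so $0$ is the infimum of $L$ over admissible embeddings, and in the paper's terminology ``minimizing $L$'' means producing a sequence $(\boldsymbol{f}^{(n)})$ that realizes the intra-class collapse, $\varepsilon_n := \sup_{\boldsymbol{x} \in C_k} \D(\boldsymbol{f}^{(n)}(\boldsymbol{x}), \boldsymbol{w}_k) \to 0$, and along which $L(\boldsymbol{f}^{(n)}) \downarrow 0$.

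Next I would record the value-level collapse: by the triangle inequality and because $\D$ dominates the Euclidean metric, for all $t, t' \in I$,
\begin{equation}
\label{eq:pp1-val}
\bigl\lVert \boldsymbol{f}^{(n)}(\boldsymbol{\gamma}(t)) - \boldsymbol{f}^{(n)}(\boldsymbol{\gamma}(t')) \bigr\rVert_2 \le C\bigl( \D(\boldsymbol{f}^{(n)}(\boldsymbol{\gamma}(t)), \boldsymbol{w}_k) + \D(\boldsymbol{w}_k, \boldsymbol{f}^{(n)}(\boldsymbol{\gamma}(t'))) \bigr) \le 2C\varepsilon_n ,
\end{equation}
so the projected set $\boldsymbol{f}^{(n)}(\boldsymbol{\gamma}(I))$ shrinks to the single point $\boldsymbol{w}_k$. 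This by itself does not bound $L$, since the restricted curve could oscillate inside a ball of radius $O(\varepsilon_n)$ and still be long; to close the gap I would invoke Assumption~\ref{ass:regular_embedding}a. Because $C_k$ is a compact connected manifold disjoint from the other known classes (Assumption~\ref{ass:regular_class}a), there is a neighbourhood $N_k \supseteq C_k$ on which a smooth map constantly equal to $\boldsymbol{w}_k$ can be realized and glued to arbitrary behaviour outside $N_k$, leaving the inter-class objectives for $j \neq k$ untouched; since the family of Assumption~\ref{ass:regular_embedding}a is ``sufficiently complex,'' it can approximate such maps in $C^1$. I would therefore choose the collapsing sequence so that, in addition to $\varepsilon_n \to 0$, the operator norm of $\partial \boldsymbol{f}^{(n)} / \partial \boldsymbol{x}$ tends to $0$ uniformly on $C_k$. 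Then, with $M := \sup_{t \in [0,1]} \lVert \boldsymbol{\gamma}'(t) \rVert_2 < \infty$ ($\boldsymbol{\gamma}$ smooth on the compact $[0,1]$),
\begin{equation}
\label{eq:pp1-bound}
L(\boldsymbol{f}^{(n)}) = \int_I \Bigl\lVert \tfrac{\partial \boldsymbol{f}^{(n)}}{\partial \boldsymbol{x}}(\boldsymbol{\gamma}(t))\, \boldsymbol{\gamma}'(t) \Bigr\rVert_2 \, dt \le M \int_I \Bigl\lVert \tfrac{\partial \boldsymbol{f}^{(n)}}{\partial \boldsymbol{x}}(\boldsymbol{\gamma}(t)) \Bigr\rVert_{\mathrm{op}} \, dt \longrightarrow 0 ,
\end{equation}
and after a monotone rearrangement $L(\boldsymbol{f}^{(n)}) \downarrow 0 = \inf L$, which is the claim.

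The main obstacle is precisely the passage from the collapse of representation \emph{values} on $C_k$ to the collapse of the \emph{length} of the projected path, because arc length is not continuous under uniform convergence of the underlying curve; the conclusion therefore genuinely needs the sequence to be \emph{chosen} (rather than an argument valid for every intra-class-minimizing sequence), which is exactly what the paper's definition of a quantity ``decreasing'' allows. The remaining points to verify are routine: (i) that the Jacobian-vanishing collapsing sequence really lies in the admissible family and that localizing near $C_k$ is compatible with the other closed-set objectives — fine since the $C_j$ are pairwise disjoint compacta by Assumption~\ref{ass:regular_class}a; (ii) that $\D$ dominates the Euclidean norm, so that \eqref{eq:pp1-val} holds; and (iii) the degenerate case $I = \{0\}$, where the projected path is a single point and $L = 0$ trivially, while for general closed $I$ (possibly with infinitely many components) there is no integrability issue because the bound on the integrand in \eqref{eq:pp1-bound} is uniform.
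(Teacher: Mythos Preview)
Your proposal is correct and reaches the same endpoint as the paper --- the Jacobian of $\boldsymbol{f}$ vanishes on $C_k$, hence the length integral collapses --- but it gets there by a genuinely different route. The paper takes an \emph{arbitrary} sequence $\boldsymbol{f}^{(n)}$ with $\D(\boldsymbol{f}^{(n)}(\boldsymbol{x}),\boldsymbol{w}_k)\to 0$ on $C_k$ and argues, via a difference-quotient estimate, that $\partial \boldsymbol{f}^{(n)}/\partial \boldsymbol{x}\to 0$ pointwise on $C_k$; it then passes from pointwise convergence of bounded integrands to convergence of the length integral. You instead observe (correctly) that uniform convergence of values does not force convergence of derivatives, and you exploit the existential wording of the paper's ``minimizing'' convention together with the ``sufficient complexity'' clause of Assumption~\ref{ass:regular_embedding}a to \emph{choose} a collapsing sequence that is, by construction, $C^1$-close to the constant map $\boldsymbol{w}_k$ on a neighbourhood of $C_k$, so that the Jacobian vanishes uniformly and the bound \eqref{eq:pp1-bound} is immediate.

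What each approach buys: the paper's argument, if it went through, would yield the stronger conclusion that \emph{every} intra-class-minimizing sequence drives the Jacobian to zero on $C_k$ (which is exactly what Theorem~\ref{thm:grad_norm}(a) later uses); however, its difference-quotient step tacitly needs the $\delta$ in the derivative approximation to be uniform in $n$, i.e.\ an equicontinuity hypothesis that is not stated. Your argument is more careful about this gap and only claims what the definition requires --- existence of one good sequence --- at the price of leaning more heavily on the informal ``sufficient complexity'' assumption to realize the locally constant map within the admissible family. Two minor notes: your step~\eqref{eq:pp1-val} is not actually needed for the length conclusion (it is motivational), so the unverified domination of $\D$ over the Euclidean metric is harmless; and the monotone rearrangement at the end is legitimate because the paper's convention only asks for a monotone sequence, not that the original sequence be monotone.
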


On the other hand, the inter-class distance maximization is presumed to increase the length of any linear path between the known classes $C_i$ and $C_j$ in the representation space by Assumption \ref{ass:regular_embedding}b. In summary, intra-class distance minimization reduces the projected path length, while the inter-class distance maximization increases the projected path length.

Now, the increasing/decreasing trend of the projected path length due to the metric learning is transferred to the Jacobian norm $\lVert \frac{d \boldsymbol{f} (\boldsymbol{\gamma}(t))}{d t} \rVert_2$ via the path length equation 
\begin{equation}
\text{length}(\boldsymbol{f} {\circ} \boldsymbol{\gamma}) = \int_0^1 \left\lVert \frac{d \boldsymbol{f}(\boldsymbol{\gamma}(t))}{dt} \right\rVert_2 dt.
\end{equation}
Accordingly, we expect that intra-class distance minimization minimizes the Jacobian norm over the known class intersecting path. In contrast, inter-class distance maximization increases the Jacobian norm over the open set intersecting path. This description, however, is constrained to the local paths.
The following theorem assures that this phenomenon is extendible from the local path to the global region. In other words, the closed-set metric learning minimizes the Jacobian norm over the known classes and increases the Jacobian norm over the open set $\mathcal{O}$.

\begin{thm}
Let $C_i$, $C_j$, and $C_k$ be different known classes.
\label{thm:grad_norm}
\begin{enumerate}
\item[(a)]
Minimizing intra-class distances $\D(\boldsymbol{f}(\boldsymbol{x}), \boldsymbol{w}_k)$ for all $\boldsymbol{x} \in C_k$ minimizes $\lVert \frac{\partial \boldsymbol{f} (\boldsymbol{x})}{\partial \boldsymbol{x}} \rVert_F$ over $C_k$.
\item[(b)]
Maximizing inter-class distances $\D(\boldsymbol{f}(\boldsymbol{x}), \boldsymbol{f}(\boldsymbol{x}'))$ for all $\boldsymbol{x} \in C_i$ and $\boldsymbol{x}' \in C_j$ strictly increases $\int_{\mathcal{O}} \lVert \frac{\partial \boldsymbol{f} (\boldsymbol{x})}{\partial \boldsymbol{x}} \rVert_F \; d\boldsymbol{x}$.
\end{enumerate}
\end{thm}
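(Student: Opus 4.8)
I would treat part~(a) as an immediate corollary of Proposition~\ref{prop:collapse} and spend the real effort on part~(b). For part~(a): the proof of Proposition~\ref{prop:collapse} already produces, along any sequence $\boldsymbol{f}^{(n)}$ driving the intra-class distances $\D(\boldsymbol{f}^{(n)}(\boldsymbol{x}),\boldsymbol{w}_k)$ to $0$, the pointwise limit $\partial\boldsymbol{f}^{(n)}/\partial\boldsymbol{x}\to\mathbf{0}$ on $C_k$, hence $\lVert\partial\boldsymbol{f}^{(n)}/\partial\boldsymbol{x}\rVert_F\to 0$ pointwise on $C_k$. Since $C_k$ is compact of finite Lebesgue measure (Assumption~\ref{ass:regular_class}a) and $\boldsymbol{f}$ is bounded smooth (Assumption~\ref{ass:regular_embedding}a), these norms are uniformly bounded on $C_k$, so I would invoke dominated convergence to get $\int_{C_k}\lVert\partial\boldsymbol{f}^{(n)}/\partial\boldsymbol{x}\rVert_F\,d\boldsymbol{x}\to 0$; being nonnegative, this quantity attains its infimum $0$ and is therefore minimized (pass to a monotone subsequence if the literal definition is wanted). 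That settles~(a).

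For part~(b): I would fix $C_i,C_j,C_k$ and parametrize the linear paths $\boldsymbol{\gamma}_{a,b}(t)=a+t(b-a)$ by $(a,b)\in C_i\times C_j$. By Assumption~\ref{ass:regular_class}b each such path meets $\mathcal{O}$ on a relatively open time-set $T_{a,b}\subset(0,1)$ of positive length ($\boldsymbol{\gamma}_{a,b}^{-1}(\mathcal{O})$ is open because $\mathcal{K}$ is closed). The path-length equation and the chain rule ($\boldsymbol{\gamma}_{a,b}'\equiv b-a$) give
\[
\operatorname{length}\!\big(\boldsymbol{f}(\boldsymbol{\gamma}_{a,b}([0,1])\cap\mathcal{O})\big)=\int_{T_{a,b}}\Big\lVert\tfrac{\partial\boldsymbol{f}}{\partial\boldsymbol{x}}\big(\boldsymbol{\gamma}_{a,b}(t)\big)(b-a)\Big\rVert_2\,dt .
\]
By Assumption~\ref{ass:regular_embedding}c the left side strictly increases under inter-class maximization for every $(a,b)$; by Assumption~\ref{ass:regular_embedding}b the right-hand integrand is pointwise non-decreasing along the maximizing sequence, so the strict increase forces the integrand to strictly increase on a positive-$t$-measure subset of $T_{a,b}$ with no cancellation. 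I would then integrate this identity over $(a,b)\in C_i\times C_j$ against Lebesgue measure (both classes have positive measure, Assumption~\ref{ass:regular_class}a): the left side strictly increases, and Fubini together with the change of variables $\boldsymbol{x}=a+t(b-a)$ rewrites the right side as $\int_{\mathcal{O}}\big(\int_{\Lambda(\boldsymbol{x})}\lVert\tfrac{\partial\boldsymbol{f}}{\partial\boldsymbol{x}}(\boldsymbol{x})\,u\rVert_2\,\rho(\boldsymbol{x},u)\,du\big)\,d\boldsymbol{x}$, where $\Lambda(\boldsymbol{x})\subseteq S^{d-1}$ is the positive-solid-angle cone of inter-class directions at $\boldsymbol{x}$ and $\rho\ge 0$ the bounded path density. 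Since a positive-solid-angle cone spans $\mathbb{R}^d$ one has $\int_{\Lambda(\boldsymbol{x})}\lVert Au\rVert_2^2\,du\ge c_0\lVert A\rVert_F^2$, and with $\lVert Au\rVert_2\le\lVert A\rVert_F$ this yields $\int_{\Lambda(\boldsymbol{x})}\lVert Au\rVert_2\,du\ge c_0\lVert A\rVert_F$ with $c_0>0$ uniform over the compact swept sub-region of $\mathcal{O}$; feeding the pointwise non-decrease of Assumption~\ref{ass:regular_embedding}b back direction-by-direction then forces $\int_{\mathcal{O}}\lVert\partial\boldsymbol{f}/\partial\boldsymbol{x}\rVert_F\,d\boldsymbol{x}$ to be non-decreasing and strictly increasing on a positive-measure part---that is~(b).

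\textbf{Main obstacle.} The hard part will be this last lift, from line integrals of the directional derivatives $\lVert\tfrac{\partial\boldsymbol{f}}{\partial\boldsymbol{x}}(\boldsymbol{x})u\rVert_2$ to the volume integral of the full Frobenius norm. The hypotheses constrain the Jacobian only along the inter-class cone $\Lambda(\boldsymbol{x})$, whereas $\lVert\partial\boldsymbol{f}/\partial\boldsymbol{x}\rVert_F$ could a priori be reduced in directions transverse to $\Lambda(\boldsymbol{x})$; making the change-of-variables/coarea bookkeeping precise, pinning down the norm comparison $\int_{\Lambda(\boldsymbol{x})}\lVert Au\rVert_2\,du\gtrsim\lVert A\rVert_F$ with a constant uniform over $\mathcal{O}$, and adopting the reading of Assumption~\ref{ass:regular_embedding}b under which non-decrease on a spanning family of directions propagates to the whole Jacobian, is where essentially all the effort sits. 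By contrast, the reduction to linear paths via the path-length equation and all of part~(a) follow immediately from Proposition~\ref{prop:collapse} and the stated assumptions.
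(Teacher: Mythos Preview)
Your treatment of part~(a) is essentially the paper's: both simply cite the pointwise conclusion $\partial\boldsymbol{f}/\partial\boldsymbol{x}\to\mathbf{0}$ on $C_k$ already established inside the proof of Proposition~\ref{prop:collapse}.

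For part~(b) your route diverges substantially from the paper's, and the paper's is considerably simpler. Rather than integrating over all endpoint pairs $(a,b)\in C_i\times C_j$ and performing a change of variables into $\mathcal{O}$ with a cone of directions $\Lambda(\boldsymbol{x})$, the paper first chooses a single hyperplane $P$ onto which the orthogonal projections of $C_i$ and $C_j$ overlap with positive $(d{-}1)$-dimensional measure (this exists because both classes have positive volume), and then rotates coordinates so that one standard axis $\boldsymbol{e}_k$ is perpendicular to $P$. The payoff is that the \emph{squared} Frobenius norm splits exactly along coordinate axes,
\[
\int_{\mathcal{X}}\Big\lVert\frac{\partial\boldsymbol{f}}{\partial\boldsymbol{x}}\Big\rVert_F^2\,d\boldsymbol{x}
=\sum_{l=1}^d\int_{\mathcal{X}}\Big\lVert\frac{\partial\boldsymbol{f}}{\partial x_l}\Big\rVert_2^2\,d\boldsymbol{x},
\]
so one never needs your norm comparison $\int_{\Lambda}\lVert Au\rVert_2\,du\gtrsim\lVert A\rVert_F$. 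The paper reads Assumption~\ref{ass:regular_embedding}b as giving pointwise non-decrease of each coordinate term $\lVert\partial\boldsymbol{f}/\partial x_l\rVert_2^2$; this handles all summands with $l\neq k$ and all of the $l=k$ integral outside the overlap region $R$. Over $R$, the axis-parallel segment $t\mapsto(\widehat{\boldsymbol{x}}_k,\,a+(b-a)t)$ is itself a \emph{linear} inter-class path, so Assumption~\ref{ass:regular_embedding}c directly yields strict increase of $\int_a^b\lVert\partial\boldsymbol{f}/\partial x_k\rVert_2^2\,dx_k$ for each $\widehat{\boldsymbol{x}}_k\in R$, and integrating over $R$ (positive measure) gives the strict increase of the full integral.

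What this buys over your plan: the rotation reduces the problem to a \emph{single} direction, so Assumption~\ref{ass:regular_embedding}c is applied to genuinely one-dimensional linear paths rather than to a $2d$-parameter family, and the exact additive decomposition of $\lVert\cdot\rVert_F^2$ replaces your coarea/change-of-variables bookkeeping and the cone inequality. Your identified ``main obstacle'' is precisely what the paper sidesteps. Note also a small logical soft spot in your sketch: a strict increase of the directional quantity together with the \emph{lower} bound $\int_\Lambda\lVert Au\rVert_2\,du\ge c_0\lVert A\rVert_F$ does not by itself force $\int_{\mathcal{O}}\lVert A\rVert_F$ to strictly increase; you would still need the pointwise non-decrease of $\lVert A\rVert_F$ (which does follow once one adopts the paper's reading of Assumption~\ref{ass:regular_embedding}b that every $\lVert\partial\boldsymbol{f}/\partial x_l\rVert_2$ is non-decreasing, via the positive-semidefiniteness of $(J')^\top J'-J^\top J$). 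The paper's coordinate-aligned argument makes that step immediate rather than requiring a separate PSD observation.
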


Theorem \ref{thm:grad_norm}b indicates that the length of the projected path can be accessed from the global integral of the Jacobian norm. Thereby, we find that the strictly increasing trend of Jacobian norm integral is positively correlated to the strictly increasing trend of the projected inter-class path length. Based on our overall observations, we deduct the below corollaries:

\begin{cor}
\label{cor:intra}
Minimizing the intra-class distances minimizes the Jacobian norm $\lVert \frac{\partial \boldsymbol{f} (\boldsymbol{x})}{\partial \boldsymbol{x}} \rVert_F$ over the known classes $\mathcal{K}$.
\end{cor}

\begin{cor}
\label{cor:support}
Maximizing the inter-class distances strictly increases strictly increases 
\begin{equation}
\Vol(S) \text{ and/or }
\mathbb{E}_{\boldsymbol{x} \sim S} [
\lVert
\tfrac{\partial \boldsymbol{f}}{\partial \boldsymbol{x}} 
\rVert_F
]
\end{equation}
where $S$ is the support of Jacobian norm 
\begin{equation}
S := \{ \boldsymbol{x} \in \mathcal{O} : \lVert \tfrac{\partial \boldsymbol{f}(\boldsymbol{x})}{\partial \boldsymbol{x}} \rVert_F >0 \},
\end{equation}
whose Jacobian norm is greater than $0$, and $\Vol(S)$ is the volume of $S$.
Hence, if $S \cap \mathcal{U} \neq \varnothing$, then the inter-class maximization enlarges the volume $\Vol(\mathcal{U} \cap S)$ and/or  increases the Jacobian norm of unknown class samples $\boldsymbol{x} \in \mathcal{U} \cap S$.
\end{cor}

Hence, maximizing the inter-class distances between the known classes access to the unknown class samples indirectly via the region $S$ of high Jacobian norm, and increases the Jacobian norm of unknown class representations.

Overall, by metric learning, the model increases the expected Jacobian norm difference between the known and unknown
\begin{equation}
\label{eq:gnd}
\underset{\boldsymbol{x} \sim \mathcal{U}}{\mathbb{E}} [
\lVert
\tfrac{\partial \boldsymbol{f} (\boldsymbol{x})}{\partial \boldsymbol{x}} 
\rVert_F
]
-
\underset{\boldsymbol{x} \sim \mathcal{K}}{\mathbb{E}} [
\lVert
\tfrac{\partial \boldsymbol{f} (\boldsymbol{x})}{\partial \boldsymbol{x}} 
\rVert_F
].
\end{equation}
The increased \textit{Jacobian norm difference} then separates the known classes from  the unknown class in the representation space:

\begin{cor}
\label{cor:sep}
The inter/intra-class learning separates the unknown class from known classes in the representation space by inducing the Jacobian norm difference between the known and unknown.
\end{cor}


\subsection{Limitation of the Theory on Jacobian Norm}
\label{sec:theory_limit}

We highlight that the Jacobian norm characteristic \textbf{is only one of the many explanatory factors} that demystifies how closed-set metric learning derives OSR; our analysis does not fully characterize all connections between closed-set metric learning and OSR. One apparent phenomenon our theory does not explain is that known and unknown representations can be separated in the metric space with having the same Jacobian norm value. Moreover, our theory is limited in characterizing the support set $S$. As the support set does not include the whole part of the open set, there would be some unknown class that is not included in the support. In this case, the Jacobian norm difference indicated in Eq. \eqref{eq:gnd} would not be explanatory.

\begin{figure*}[t]
\centering 
\includegraphics[width=.75\linewidth]{./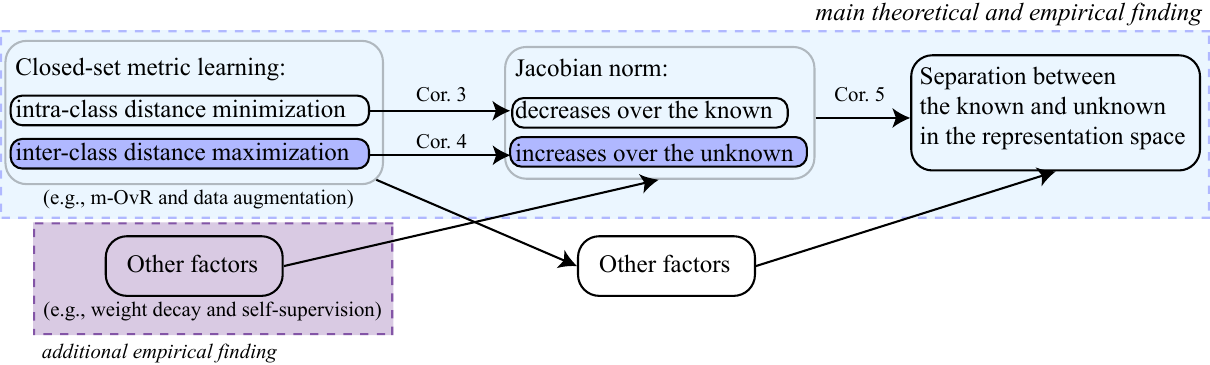}
\caption{
{ 
The summary of our theory on how a model becomes aware of the unknown by the closed-set metric learning over the known classes.
}
}
\label{fig:summary_theory}
\end{figure*}

\subsection{Summary of Theory}
\label{sec:theory_summary}

Training a discriminative model over the known classes reduces the Jacobian norm over known class samples, while increasing the volume of region of high Jacobian norm in the open set. Due to the increased volume of high Jcaobian norm region, the unknown class samples likely fall into this region, and thus involve high Jacobian norm values. Overall, the embedding representations of known classes are separated from those of unknown class because the Jacobian norms of known classes are low while the Jacobian norms of unknown class are high.
Our theoretical finding is summarized in Fig.~\ref{fig:summary_theory}.


\section{Empirical Verification of the Theory}
\label{sec:exproof}

In this section, we empirically verify the theory developed in Sec.~\ref{sec:theory} in multiple aspects.

\subsection{Experiment Setup}

We empirically analyze the relationship between the Jacobian norm difference and the unknown class detection to evidence our theoretical analysis. To this end, we train our proposed model as described in Sec.~\ref{sec:model} and \ref{sec:expcomp}, and evaluate  over the standard OSR benchmark datasets \cite{neal2018open}. To compute the degree of separation between known and unknown, we use the detection score provided in Sec.~\ref{sec:model_score} and evaluate the area under the receiver-operating-characteristic curve (AUC) metric \cite{bradley1997use}. The discriminative (cluster) quality of known class representations is measured in Davies-Bouldin Index (DBI) \cite{davies1979cluster}, which measures the ratio of intra-class distance to inter-class distance. 
All experiments are conducted with one 12GB GPU RTX 2080-ti. Due to resource limitations, empirical observations are made on standard OSR datasets rather than recently proposed high-resolution OSR datasets \cite{DBLP:journals/corr/abs-2110-06207}.

\noindent
\textbf{Datasets.} For the empirical analysis, we test on the standard OSR datasets as described in Protocol A of Sec. \ref{sec:exp_data}. Each dataset consists the $K$ number of known classes and $1$ unknown class, overall constituting $K+1$ semantic classes. The unknown class can be constituted by a diverse set of semantic classes, but is regarded as a single chunk. The known classes must have no semantic overlap with the unknown class.

\subsection{Empirical Observations}


\begin{figure*}[t]
\centering 
\includegraphics[width=.85\linewidth]{./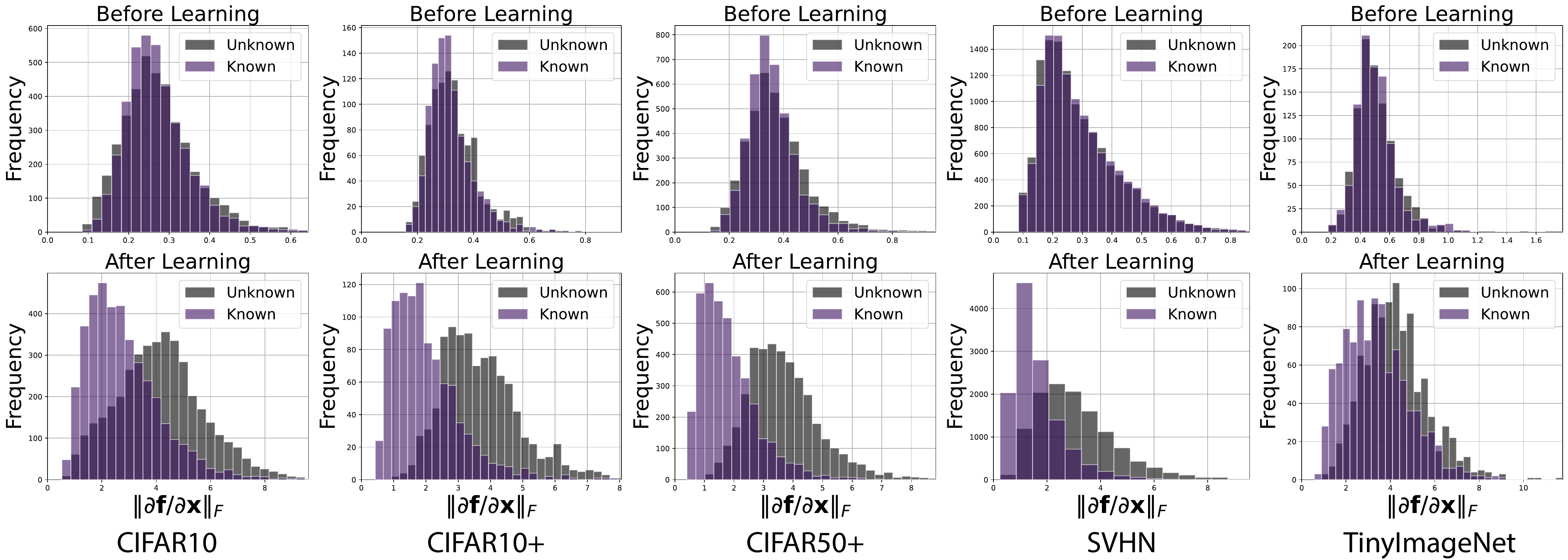}
\caption{
The distribution of Jacobian norms of representations \textbf{before and after training}. Although the model is trained only on the known class data, the model learns to increase the Jacobian norm of unknown class representation, while lowering the Jacobian norm of the known class representation.
}
\label{fig:before_after}
\end{figure*}

\begin{figure}[t]
\centering 
\includegraphics[width=.85\linewidth]{./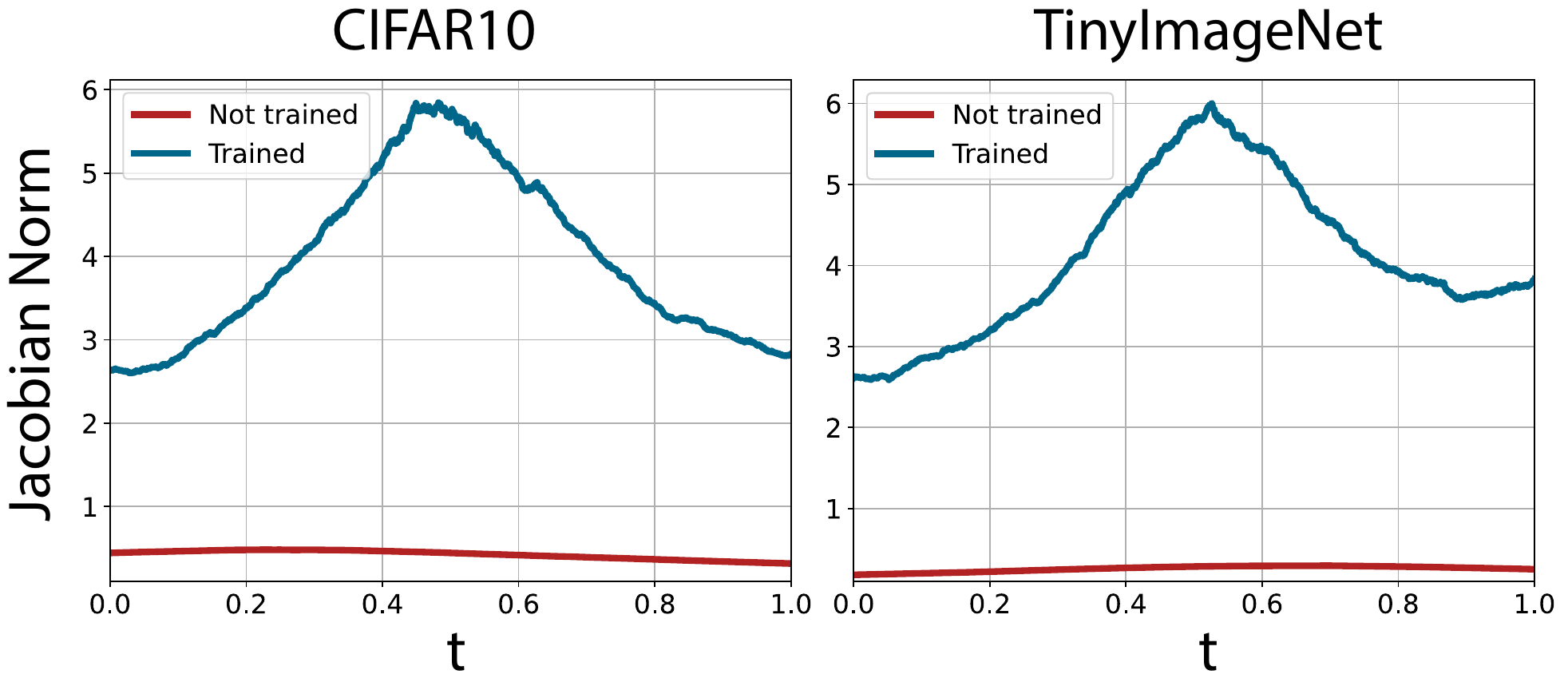}
\caption{
Given known class samples $\boldsymbol{x}_0 \in C_i$ and $\boldsymbol{x}_1 \in C_2$ from two different classes $C_i$ and $C_j$, we \textbf{linearly interpolate} between $\boldsymbol{x}_0$ and $\boldsymbol{x}_1$  by $\boldsymbol{x}_t := (1-t)\boldsymbol{x}_0 + t\boldsymbol{x}_1$. Then, we measure the Jacobian norm of the representation $\boldsymbol{f}(\boldsymbol{x}_t)$. When $t \approx 0.5$, the interpolated sample $x_t$ passes through the open set, where unknown class samples arise.
}
\label{fig:interpolate}
\end{figure}

\noindent
\textbf{Jacobian norm before and after training.}
Fig.~\ref{fig:before_after} demonstrate that the gradient norm separates the representations only after training. Fig.~\ref{fig:interpolate} displays the gradient norm over the linearly interpolated data samples $\boldsymbol{x}_t$ for $t \in [0,1]$ between two different class samples $\boldsymbol{x}_0 \in C_i$ and $\boldsymbol{x}_1 \in C_j$.
It shows that the interpolated samples inside the open region have a larger gradient norm than those in the known classes. These empirical observations support our theory.

In practice, however, the inter/intra-class distance optimizations conflict; thus, the overall gradient norm increases for both the known and unknown. 

Moreover, on some datasets (SVHN and TinyImageNet), the inter-class separation may not be substantial due to innate data characteristics such as small inter-class data variance. Accordingly, based on Theorem \ref{thm:grad_norm}b, the weak inter-class separation induces relatively smaller difference in Jacobian norm between the known and unknown, resulting a larger overlap between them.

\begin{figure*}[t]
\begin{center}
\includegraphics[width=.85\linewidth]{./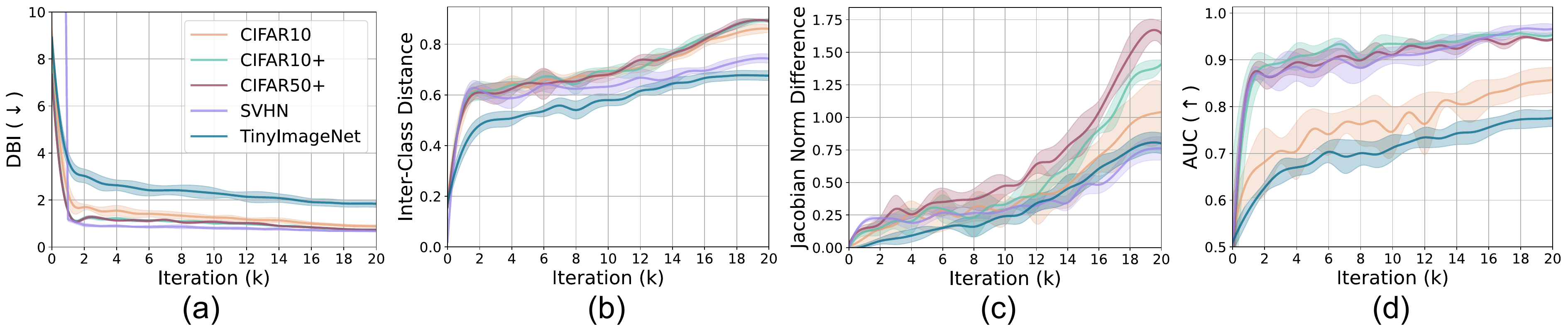}
\end{center}
\caption{
Several metrics are measured while a discriminative model (ours) is trained. (a) The discriminative quality of known class representations is measured in DBI. (b) The averaged inter-class distances between known classes. (c) The Jacobian norm difference between the known and unknown classes. (d) The degree of separation between known and unknown class representations. \textbf{All metrics are improved as the discriminative model learns}.
}
\label{fig:iter_vs_metric_ours}
\end{figure*}

\noindent
\textbf{The dynamics of the Jacobian norm during training.}
Fig.~\ref{fig:iter_vs_metric_ours} shows the dynamics of different quantities during training. The intra/inter-class distance optimization increases the quality of cluster separation measured by DBI.
Accordingly, the linear projected path length between different known classes in the representation space increases (Fig.~\ref{fig:iter_vs_metric_ours}b). As a result, the model increases both the Jacobian norm difference (Fig.~\ref{fig:iter_vs_metric_ours}c) and the degree of separation between known and unknown classes (Fig.~\ref{fig:iter_vs_metric_ours}d) as claimed by the theory. 

Although the global trend has a simple correspondence between these metrics, a more careful look at the graphs of Fig.~\ref{fig:iter_vs_metric_ours} shows that the metrics involve different phases during training. Specifically, the intra/inter ratio is stable at the early stage of training. On the other hand, the inter-class distance is still increasing even at a later stage. The Jacobian norm difference rises more gradually, and the rate of increase becomes large at the last stage. The separation between the known and unknown also increases largely at the early stage but continues to improve even later in training. These observations show that the known and unknown class representations are separated as the model makes their Jacobian norm different. Still, the Jacobian norm is not the only factor contributing to their separation. 

\begin{figure*}[t]
\centering 
\includegraphics[width=.85\linewidth]{./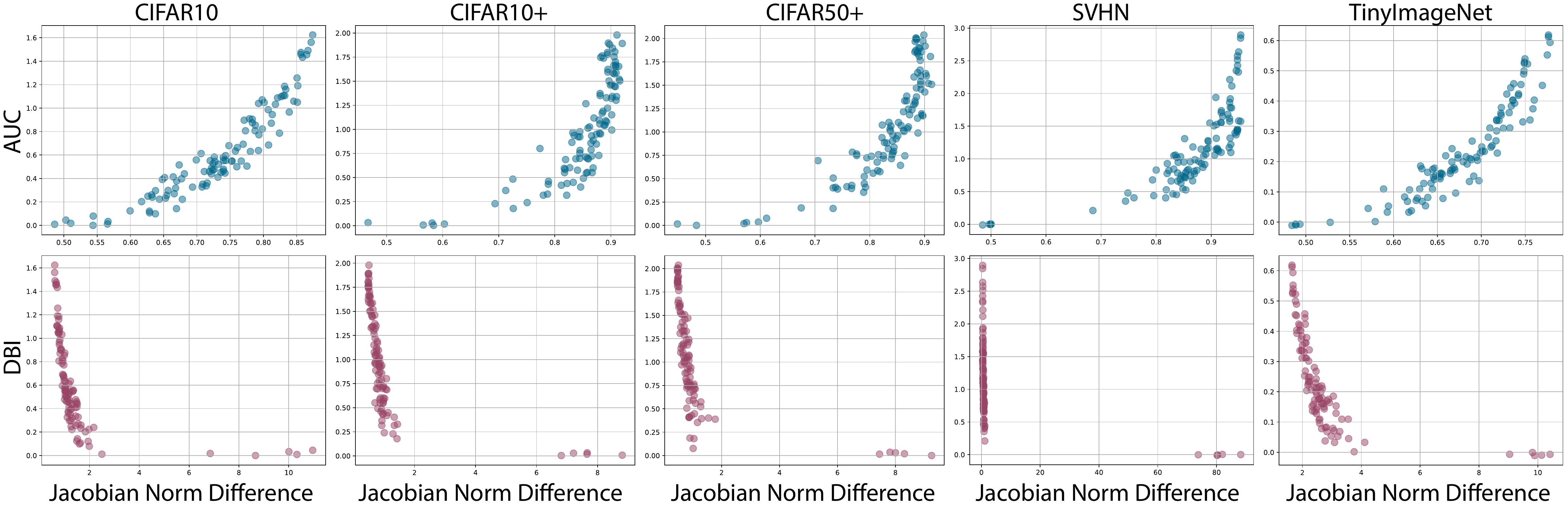}
\caption{
We measure the detection performance (in AUC), the discriminative quality of known classes (in DBI), and the averaged Jacobian norm difference \textbf{for a single model during different training iterations, indicating a strong correlation between these metrics}.
}
\label{fig:scatter_all_fixed_model}
\end{figure*}

\noindent
\textbf{The correlation between Jacobian norm and discriminative metrics.}
For each dataset, we measure the following three metrics during different training iterations: the discriminatory quality of known class representations (DBI), the unknown class detection performance (AUC), and the averaged Jacobian norm difference between known and unknown classes.

Fig.~\ref{fig:scatter_all_fixed_model}(1st row) shows that the degree of separation between the known and unknown strongly correlates to the Jacobian norm difference. This observation evidences our theoretical claim that the closed-set metric learning separates the unknown by increasing their Jacobian norm difference during training. There is, however, nonlinearity between these two metrics, showing that the Jacobian norm difference is not the only factor contributing to the separation of unknown class representation.

Fig.~\ref{fig:scatter_all_fixed_model}(2nd row) shows a similar correlation trend between the intra/inter-class distance ratio (DBI) and the Jacobian norm difference. However, the nonlinearity between them is severe. The plot indicates that the Jacobian norm difference abruptly increases at a later stage of training where the intra/inter ratio is already small and stable.

\begin{figure*}[t]
\centering 
\includegraphics[width=.85\linewidth]{./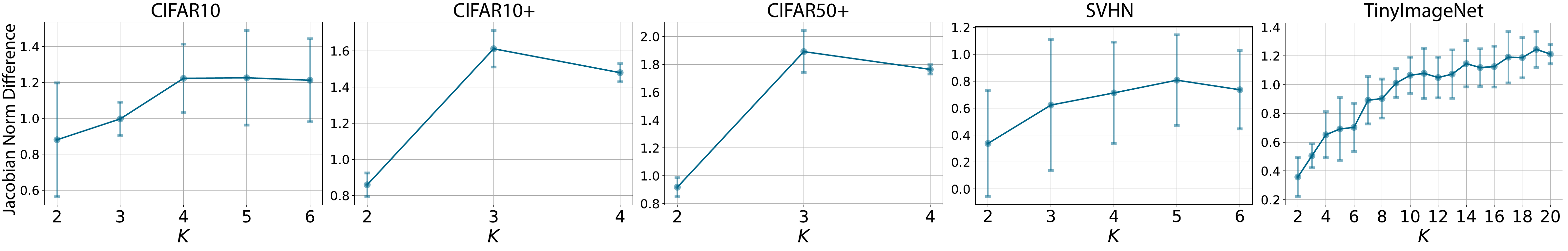}
\caption{
Increasing the number $K$ of known classes increases the Jacobian norm difference between the known and unknown classes.
}
\label{fig:fig_k_jnd}
\end{figure*}

\noindent
\textbf{The relation between the Jacobian norm difference and the number of discriminative classes.}
Theorem \ref{thm:grad_norm} states that the inter-class distance maximization between a single pair of inter classes $(C_i, C_j)$ can cause to increase in the Jacobian norm difference. Therefore, we hypothesize that a larger number of inter-class pairs would improve the Jacobian norm difference, contributing to better separation between known and unknown class representations. The results are given in Fig.~\ref{fig:fig_k_jnd} supports the hypothesis by showing that the Jacobian norm difference tends to become larger with a larger number $K$ of known classes. We note that the exceptions may occur as some known classes are more similar to the unknown class examples; adding to the train data a known class that is similar to the unknown may slightly reduce the Jacobian norm difference.

\begin{figure*}[!t]
\begin{center}
\includegraphics[width=.85\linewidth]{./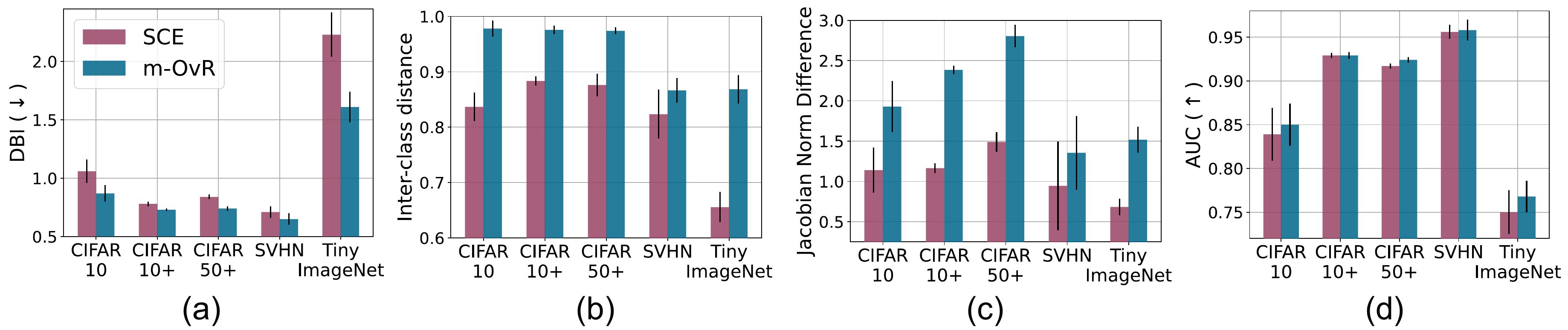}
\end{center}
\caption{
Comparison of the loss functions (SCE and m-OvR) with respect to (a) the discriminative quality in DBI, (b) the average of the pairwise distance between class-wise mean features, (c) Jacobian norm difference, and (d) unknown class detection performance in AUC.
}
\label{fig:comp_loss}
\end{figure*}

\begin{figure*}[t]
\centering 
\includegraphics[width=.85\linewidth]{./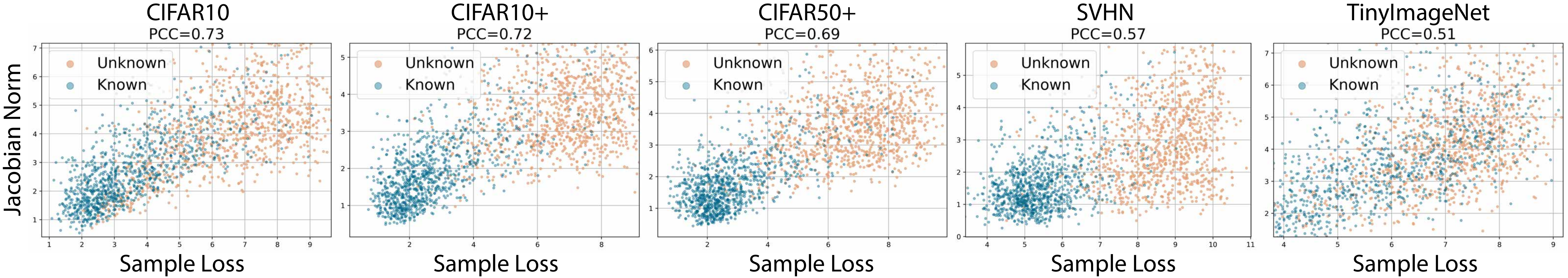}
\caption{
The correlation between sample-wise losses and the Jacobian norm of the corresponding representations.
}
\label{fig:scatter_sample_loss_vs_jn}
\end{figure*}

\section{Method}
\label{sec:model}

We develop an effective OSR method based on our theoretical finding given in Fig.~\ref{fig:summary_theory}. Firstly, we devise a margin-based one-vs-rest that can induce powerful inter-class separation between different known classes. Then, we integrate the loss term with other regularizers that enhance the separation of the unknown via the Jacobian norm difference. Finally, for the unknown class detection in the inference stage, we utilize the sample-wise loss function as it is aware of both the Jacobian norm difference and proximity to the known class prototypes.


\subsection{Training: marginal One-vs-Rest Loss (m-OvR)}
Our analysis indicates that the powerful inter-class separation is the key to separate the known from the unknown in the Jacobian norm and therefore in the representation space. Motivated upon this theory, we devise a marginal one-vs-rest (m-OvR) loss that induces powerful inter-class separation by preventing the collapse between inter-class prototypes $\boldsymbol{w}_k$ and effective inter-class gradients.
The m-OvR loss is given by
\begin{equation}
\label{eq:ovr_loss}
\mathcal{L}(\boldsymbol{x}, y) = -\sum_{k=1}^K 
1\{y=k\} \log p(k|\boldsymbol{x}) \\
+ 1\{y=k\} \log \left( 1 - p(k|\boldsymbol{x}) \right)
\end{equation}
where $(\boldsymbol{x},y)$ is a labeled sample, and $1\{\cdot\}$ is an indicator function. The class probability $p(k|\boldsymbol{x})$ is given by $\sigma(Ts_k)$ where $\sigma$ is the sigmoid activation, $s_k$ is the cosine similarity between the representation $\boldsymbol{f}(\boldsymbol{x})$ and the $k$-th class proxy prototype $\boldsymbol{w}_k$, and $T$ is a scale term to calibrate the sigmoid probability.

During training, the bare minimization of the loss in Eq.~\eqref{eq:ovr_loss} involves a harmful behavior; particularly, minimizing the loss in Eq.~\eqref{eq:ovr_loss} collapses the inter-class prototypes as observed by below proposition:

\begin{prop}
\label{prop:ovr_collapse}
The minimum OvR loss collapses all prototypes $\boldsymbol{w}_k = \boldsymbol{w}_{k'}$ except $\boldsymbol{w}_y$. 
\end{prop}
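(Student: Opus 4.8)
The plan is to minimize the OvR loss of Eq.~\eqref{eq:ovr_loss} prototype by prototype, exploiting that, after the unit-normalization step, each summand is a monotone function of a single cosine similarity. Since $\lVert \boldsymbol{f}(\boldsymbol{x}) \rVert_2 = \lVert \boldsymbol{w}_k \rVert_2 = 1$, write $s_k = \boldsymbol{f}(\boldsymbol{x})^\top \boldsymbol{w}_k \in [-1,1]$, so that $p(k|\boldsymbol{x}) = \sigma(T s_k)$ is strictly increasing in $s_k$ on this compact interval. In its standard one-vs-rest form the loss splits into a single ``attract'' term $-\log \sigma(T s_y)$ for $k=y$ and $K-1$ ``repel'' terms $-\log\bigl(1 - \sigma(T s_k)\bigr)$ for $k \neq y$, and each term depends on exactly one prototype.

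Next I would minimize each term in isolation. The attract term is strictly decreasing in $s_y$, so it is minimized only at $s_y = 1$; two unit vectors with inner product $1$ coincide, hence $\boldsymbol{w}_y = \boldsymbol{f}(\boldsymbol{x})$. Each repel term is strictly increasing in $s_k$ (a composition of the increasing maps $\sigma$ and $u \mapsto -\log(1-u)$), so it is minimized only at $s_k = -1$, hence $\boldsymbol{w}_k = -\boldsymbol{f}(\boldsymbol{x})$. Because the $K$ summands involve pairwise-disjoint prototypes, these per-term minima are attained simultaneously, so every minimizer of $\mathcal{L}(\boldsymbol{x},y)$ satisfies $\boldsymbol{w}_k = -\boldsymbol{f}(\boldsymbol{x}) = -\boldsymbol{w}_y$ for all $k \neq y$.

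Combining the two facts yields the statement: at the minimum every non-target prototype equals $-\boldsymbol{w}_y$, so in particular $\boldsymbol{w}_k = \boldsymbol{w}_{k'}$ for all $k, k' \neq y$. I would close with two remarks: the infimum is genuinely attained, since $\mathcal{L}(\cdot,y)$ is continuous on the compact product of unit spheres, so the claim concerns an actual minimizer rather than a limiting configuration; and this is precisely the degeneracy the margin variant introduced afterwards is meant to cure, since asking only for $s_k \le -m$ instead of pushing $s_k$ all the way to $-1$ lets the $K-1$ non-target prototypes spread over the sphere rather than all collapsing onto the antipode of $\boldsymbol{w}_y$.

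The main obstacle is definitional rather than computational: the clean per-term argument uses that ``similarity'' is the \emph{bounded} cosine similarity, so that the optimum of each repel term is the attainable boundary value $s_k = -1$ and each prototype appears in exactly one summand. If the similarity were an unnormalized inner product with unbounded range, or coupled several prototypes, one would instead have to show that the loss still drives $\boldsymbol{w}_y$ to align with $\boldsymbol{f}(\boldsymbol{x})$ and the remaining prototypes to anti-align with it, while keeping track of the scale $T$; I would dispatch this by invoking the explicit normalization of $\boldsymbol{f}$ and the $\boldsymbol{w}_k$ stated just above Eq.~\eqref{eq:ovr_loss} to reduce to the bounded case.
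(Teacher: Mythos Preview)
Your argument is correct and follows essentially the same route as the paper's proof: for a single labeled sample with unit-normalized embedding and prototypes, the OvR loss decouples over the prototypes, the positive term forces $\boldsymbol{w}_y=\boldsymbol{f}(\boldsymbol{x})$, and each negative term forces $\boldsymbol{w}_k=-\boldsymbol{f}(\boldsymbol{x})$, hence all non-target prototypes collapse onto $-\boldsymbol{w}_y$. Your treatment is in fact more careful than the paper's (monotonicity, compactness, simultaneous attainability), and your closing remark correctly anticipates the role of the margin in Proposition~\ref{prop:ovr_margin}.
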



This inter-class collapse weakens the inter-class separation. 
We mitigate this situation by inserting a margin in the similarity computation; namely, during the training of the OvR metric-learning loss, the similarity is computed by
\begin{equation}
s_k = \cos( \arccos( \boldsymbol{w}_k \cdot \boldsymbol{f}(\boldsymbol{x})) + m)
\end{equation}
where $m>0$ is the margin. The margin ensures an angular gap of degree $2m$ between inter-class prototypes, thus preventing their collapse:
\begin{prop}
\label{prop:ovr_margin}
For the nonzero margin $m >0$, however, the angle gap can be assured between different prototypes $\measuredangle(\boldsymbol{w}_{k_1}, \boldsymbol{w}_{k_2}) {\geq} 2m $.
\end{prop}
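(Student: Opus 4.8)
The plan is to argue by contradiction, using the same notion of ``minimum of the OvR loss'' as in Proposition~\ref{prop:ovr_collapse}: assume the m-OvR objective --- Eq.~\eqref{eq:ovr_loss} with the margined similarity $s_k = \cos(\measuredangle(\boldsymbol{f}(\boldsymbol{x}),\boldsymbol{w}_k)+m)$ and $p(k|\boldsymbol{x})=\sigma(Ts_k)$ --- attains its minimum at some configuration of unit prototypes $\{\boldsymbol{w}_k\}$ and representations for which $\measuredangle(\boldsymbol{w}_{k_1},\boldsymbol{w}_{k_2})=\alpha<2m$ for some $k_1\neq k_2$, and then exhibit a perturbation that strictly lowers the loss.

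First I would reduce to the collapsed configuration, in the spirit of Proposition~\ref{prop:collapse}. By the ``sufficient complexity'' clause of Assumption~\ref{ass:regular_embedding}a the representations of the samples of a fixed class $k$ can be placed freely on the unit sphere, so at a minimum they all map to a common minimizer $\boldsymbol{z}_k$. Writing $\theta_j:=\measuredangle(\boldsymbol{z}_k,\boldsymbol{w}_j)$ and using $1-\sigma(u)=\sigma(-u)$, the per-sample loss of a class-$k$ point equals $-\log\sigma(T\cos(\theta_k+m))-\sum_{j\neq k}\log\sigma(-T\cos(\theta_j+m))$; the matched term is strictly decreasing in $\cos(\theta_k+m)$, so it pins $\theta_k$ toward $0$, and once the other prototypes are well separated their repelling terms carry vanishing gradient near $\boldsymbol{w}_k$, which (in this regime) gives $\boldsymbol{z}_k=\boldsymbol{w}_k$. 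With $\boldsymbol{z}_k=\boldsymbol{w}_k$ the objective becomes, up to the constant contributed by the matched terms, a positive-weighted sum over ordered pairs $(k,j)$, $k\neq j$, of $-\log\sigma(-T\cos(\alpha_{kj}+m))$ with $\alpha_{kj}:=\measuredangle(\boldsymbol{w}_k,\boldsymbol{w}_j)$, each summand being strictly increasing in $\cos(\alpha_{kj}+m)$.

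Now $\alpha<2m$ gives $\alpha+m<3m<\pi$ (the inequality $2m<\pi$ is implicit in the statement, since otherwise it is vacuous), so $\alpha+m$ lies in the regime where $\cos$ is strictly decreasing. Hence an infinitesimal rotation of $\boldsymbol{w}_{k_1}$ and $\boldsymbol{w}_{k_2}$ away from each other along the geodesic they span --- chosen small enough to keep both unit and not to decrease any other pairwise angle to first order --- strictly increases $\alpha_{k_1k_2}$ and therefore strictly decreases the loss, contradicting minimality; hence $\measuredangle(\boldsymbol{w}_{k_1},\boldsymbol{w}_{k_2})\geq 2m$ for every $k_1\neq k_2$. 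The constant $2m$ (rather than $m$) is the bookkeeping of the two endpoints: the margin offsets by $m$ the effective angle of a class-$k_1$ sample toward $\boldsymbol{w}_{k_2}$ and, by the symmetric computation for class-$k_2$ samples, offsets by another $m$ the effective angle toward $\boldsymbol{w}_{k_1}$, so each prototype must keep an angular ``safety cone'' of opening $m$ and these cones must not overlap.

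The step I expect to be the genuine obstacle is the collapse reduction in a \emph{crowded} embedding: if $d_z$ is too small to seat all $K$ prototypes comfortably, the repelling terms are not negligible, $\boldsymbol{z}_k$ need not equal $\boldsymbol{w}_k$, and the ``push the pair apart'' perturbation can be blocked by the remaining prototypes. To make the argument rigorous I would either invoke Assumption~\ref{ass:regular_embedding}a to take $d_z$ large enough that the prototypes have room to spread (e.g.\ $d_z\geq K$), or replace the single-pair move by a joint perturbation of the prototypes together with their matched representations and verify its feasibility; that feasibility check is the delicate part of the write-up, and the full argument is deferred to the supplementary material.
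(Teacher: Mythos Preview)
Your approach differs substantially from the paper's. The paper (in the supplementary, treating Propositions~\ref{prop:ovr_collapse} and~\ref{prop:ovr_margin} together) sets up no contradiction and no perturbation: it simply fixes a \emph{single} labeled sample $(\boldsymbol{x},y)$ and computes the per-sample optimum directly. At that optimum $s_y$ is maximal (forcing $\boldsymbol{f}(\boldsymbol{x})=\boldsymbol{w}_y$) and each negative $s_k$ is driven to its minimum value $-1$; with the margin, $s_k=\cos\bigl(\measuredangle(\boldsymbol{w}_k,\boldsymbol{f}(\boldsymbol{x}))+m\bigr)=-1$ forces $\measuredangle(\boldsymbol{w}_k,\boldsymbol{f}(\boldsymbol{x}))=\pi-m$, i.e.\ $\measuredangle(\boldsymbol{w}_k,-\boldsymbol{w}_y)=m$. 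That is the entire argument --- four lines, no reduction to a collapsed multi-class configuration and no feasibility analysis.

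Your route --- collapse each class to $\boldsymbol{z}_k=\boldsymbol{w}_k$, rewrite the loss as a function of the pairwise prototype angles, then push a too-close pair apart --- is considerably more elaborate, and what it buys is a statement about the \emph{joint} multi-class optimum rather than a single-sample one. Two remarks. First, your perturbation actually proves more than you claim: the summand $-\log\sigma(-T\cos(\alpha+m))$ is strictly decreasing in $\alpha$ on the whole interval $[0,\pi-m]$, so the same contradiction fires whenever $\alpha<\pi-m$, not merely when $\alpha<2m$; the natural output of your argument (in the uncrowded regime) is therefore $\alpha_{k_1k_2}\ge\pi-m$, which matches the paper's single-sample value and is stronger than $2m$ once $m<\pi/3$. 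The ``safety cone'' paragraph is a pleasant heuristic for why $2m$ is the advertised constant, but it is not what the perturbation step itself establishes. Second, the small arithmetic step ``$\alpha<2m\Rightarrow\alpha+m<3m<\pi$'' needs $m<\pi/3$, not the $2m<\pi$ you invoke; harmless for the default $m=0.5$, but worth tightening. The crowded-embedding obstruction you rightly flag is exactly where the two approaches diverge in scope: the paper sidesteps it entirely by working with one sample, at the price of not addressing the full multi-class configuration you are trying to control.
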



In addition, the proposed m-OvR induces more powerful inter-class separation than the standard softmax cross-entropy (SCE) loss:

\begin{prop}
\label{prop:movr_sce}
Assume $s_y > 0$. Then, the inter-class gradient for the m-OvR $\frac{\partial s^{\text{m-OvR}}_k}{\partial \theta}$ is greater than that for the SCE $\frac{\partial s^{\text{SCE}}_k}{\partial \theta}$.
\end{prop}

Therefore, m-OvR is more effective at increasing the Jacobian norm difference and, hence, the unknown class detection performance accordingly. 

The empirical observations given in Fig.~\ref{fig:comp_loss} indicate the effectiveness of m-OvR compared to the SCE loss in terms of Jacobian norm difference, discriminative quality of known class representations, and the unknown class detection performance based on the detector in Sec.~\ref{sec:model_score}.

\begin{figure*}[t]
\begin{center}
\includegraphics[width=.65\linewidth]{./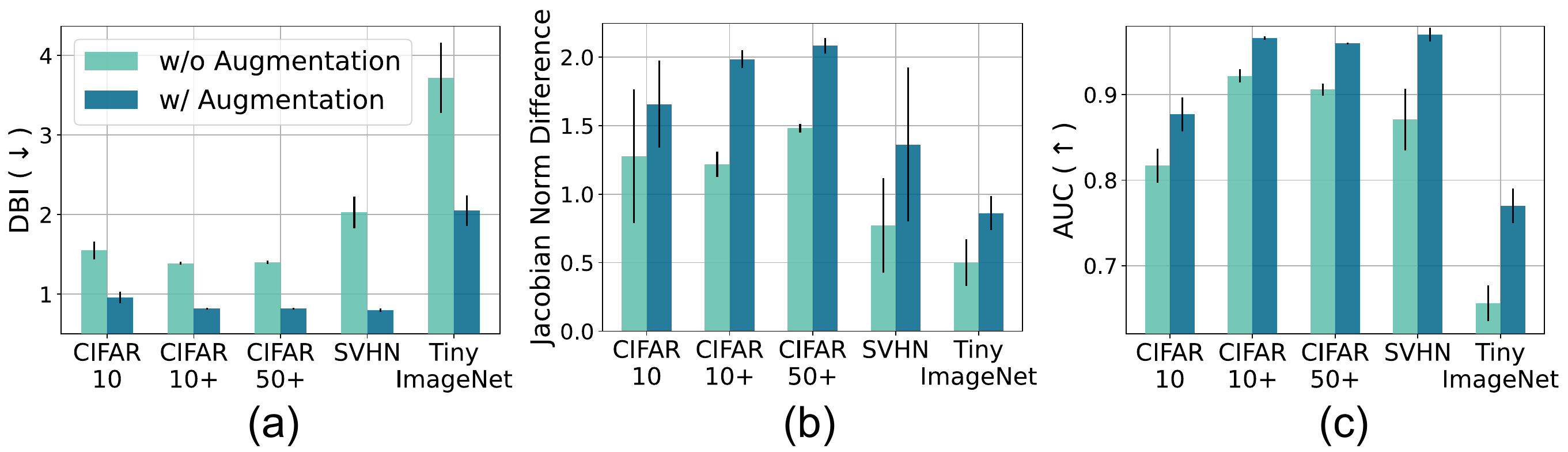}
\end{center}
\caption{
The effect of data augmentation with respect to (a) the discriminative quality in DBI, (b) Jacobian norm difference, and (c) unknown class detection performance in AUC.
}
\label{fig:comp_aug}
\end{figure*}

\begin{figure*}[t]
\begin{center}
\includegraphics[width=.65\linewidth]{./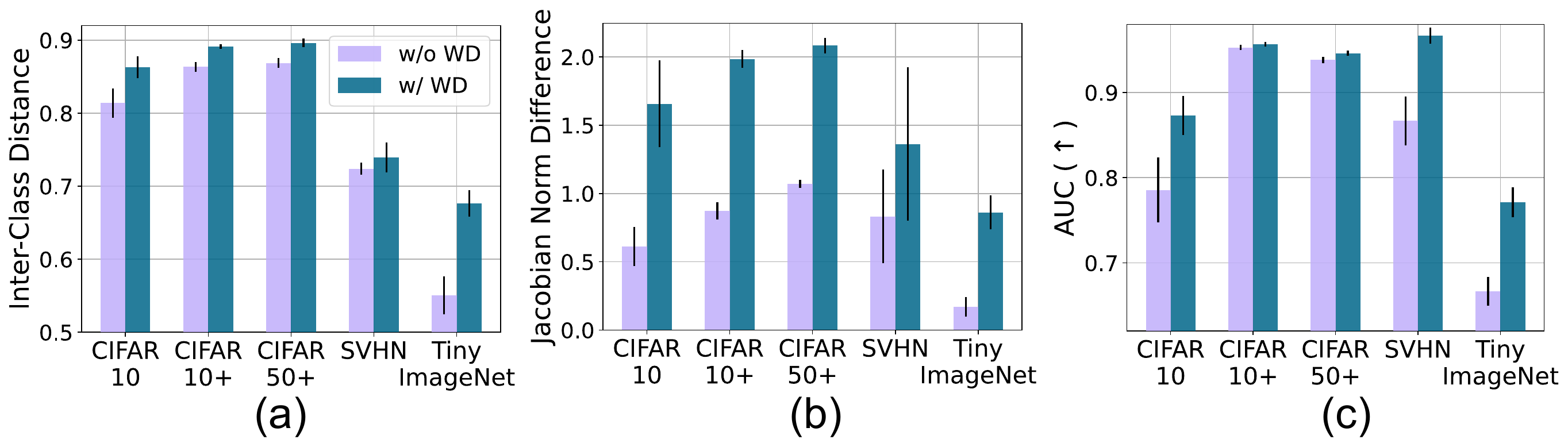}
\end{center}
\caption{
The effect of weight decay (WD) with respect to (a) averaged inter-class distance, (b) Jacobian norm difference, and (c) unknown class detection performance in AUC.
}
\label{fig:comp_wd}
\end{figure*}

\begin{figure*}[t]
\begin{center}
\includegraphics[width=.65\linewidth]{./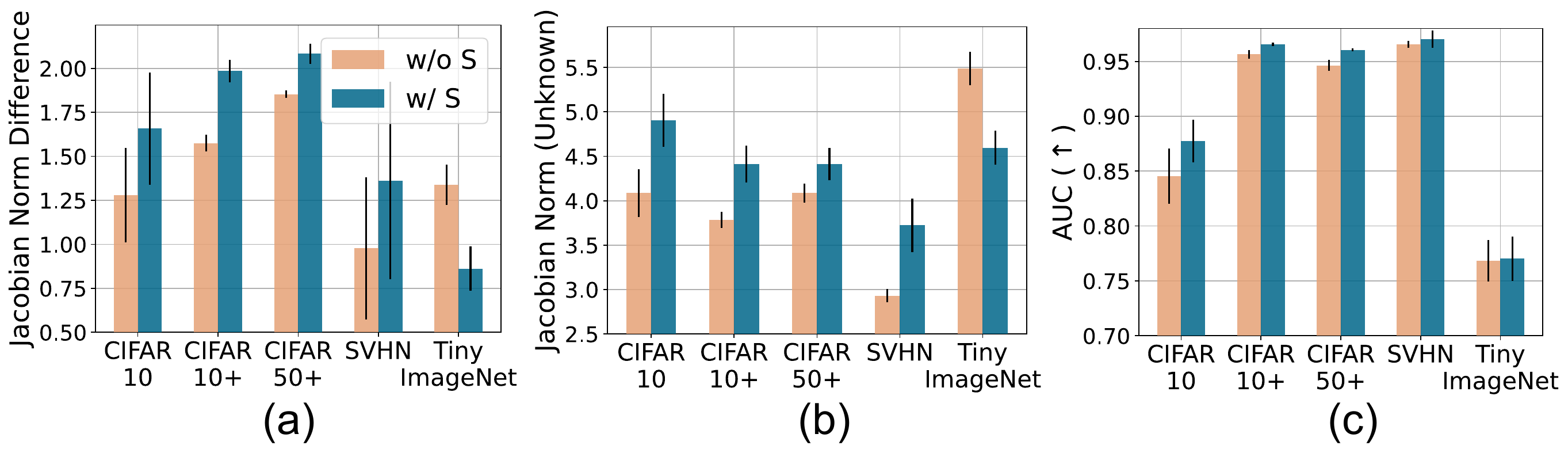}
\end{center}
\caption{
The effect of auxiliary self-supervision (S) with respect to (a) Jacobian norm difference, (b) Jacobian norm of unknown class, and (c) unknown class detection performance in AUC.
}
\label{fig:comp_self}
\end{figure*}


\subsection{Training: Subsidiary Techniques to Improve OSR}
\label{sec:model_sub}
Using the Jacobian norm principle from Sec.~\ref{sec:theory}, we explain how the standard techniques (weight decay, auxiliary self-supervision, and data augmentation) improve the separation between known and unknown class representations, thereby improving the OSR performance. Our final model is combined with these techniques.

\noindent \textbf{Data Augmentation.}
The training data is usually limited. Hence, directly applying metric learning to the raw data without augmentation results in suboptimal inter-class separation and intra-class compactness. The Jacobian norm difference between known and unknown class representations would be negligible in this case. Applying data augmentation resolves this issue by expanding the training set size based on the prior human knowledge of the data. Furthermore, the improved Jacobian norm difference by data augmentation enhances the unknown class detection (Fig.~\ref{fig:comp_aug}).

\noindent \textbf{Weight Decay.} 
Based on \cite{zhang2020deep}, the embedding similarity $s_k$ is optimized based on the gradient \begin{equation}
\partial s_k / \partial \widehat{\boldsymbol{f}}
= 
(\boldsymbol{w}_k - s_k \boldsymbol{f} ) \cdot
\lVert \widehat{\boldsymbol{f}} \rVert_2^{-1}.
\end{equation}
Thus, the small norm $\lVert \widehat{\boldsymbol{f}} \rVert_2$ of the (unnormalized) representation can incite stronger inter-class separation. The weight decay decreases this norm by decreasing the values of the network parameters in $\widehat{\boldsymbol{f}}$ \cite{zhang2018three}. Based on our theory, the enhanced inter-class separation results in higher Jacobian norm values of the unknown class representation, resulting in better separation between the known and unknown in the representation space. The experimental results in Fig.~\ref{fig:comp_wd} precisely verify this theoretical observation.

\noindent \textbf{Auxiliary Self-Supervision.}
To improve the unknown class detection performance, several works \cite{hendrycks2019using,golan2018deep} employ an auxiliary supervision task to predict the degree of rotation (either 0, 90, 180, or 270) on the rotated images. This extra discriminative task poses additional inter-class separation learning on the model. Based on our observations in Sec.~\ref{sec:theory} and \ref{sec:exproof} and Fig.~\ref{fig:fig_k_jnd}, posing additional inter-class separation increases the Jacobian norm of the unknown, thereby improving the separation between the known and unknown class representations (Fig.~\ref{fig:comp_self}). We note, however, that the auxiliary self-supervision should be accompanied with care; predicting rotation in a standard manner may collapse the original class prototypes $w_k$ as the rotation prediction head regards the original classes as a single $0$-degree class. Hence, we add the auxiliary self-supervision loss $\mathcal{L}_{self}$ with a small coefficient $\lambda_{self}=0.1$.

Our final metric-learning objective is to minimize the combined loss $\mathcal{L} + \lambda_{self} \mathcal{L}_{self}$ with data augmentation and weight decay.

\subsection{Inference: Unknown Class Detection by the Sample-Wise Loss Function}
\label{sec:model_score}
To effectively detect unknown class samples during the inference stage, we utilize the sample-wise loss function. Based on our theoretical finding, the loss function is aware of both the Jacobian norm difference and the closeness to the known class prototype:
\begin{alignat}{2}
\label{eq:score_loss}
\mathcal{L}(\boldsymbol{x}) \text{ low/high} 
\Longleftrightarrow & \mathbb{E}_{\boldsymbol{x}_u \sim \mathcal{U}}\left[
\left\lVert \frac{\partial \boldsymbol{f}}{\partial \boldsymbol{x}}(\boldsymbol{x}_u) \right\rVert_2
\right]
- \left\lVert \frac{\partial \boldsymbol{f}}{\partial \boldsymbol{x}}(\boldsymbol{x}) \right\rVert_2 \text{ low/high} && \\
& \text{ and } \min_k \mathcal{D}( \boldsymbol{f}(\boldsymbol{x}), \boldsymbol{w}_k) \text{ low/high} && \nonumber
\end{alignat}
Hence, the loss function (1) differentiates the the known class representations in the low Jacobian norm region from the unknown class representations residing in the region of high Jacobian norm, and (2) separates the known class close to the prototypes $w_k$ from the unknown class instances. The positive correlation indicated in Fig.~\ref{fig:scatter_sample_loss_vs_jn} vindicates the property of loss function described in Eq.~\eqref{eq:score_loss}.

\section{Experiments for Comparison}
\label{sec:expcomp}

\begin{table*}[t!]
\caption{Unknown class detection performance (in AUC) and closed-set accuracy (ACC) for OSR where the unknown class is derived from the same distribution. The results are the averages from 5 random splits.  * indicates that the values are taken from the references. `Arch.' denotes the backbone network used.
}
\label{table:performance_osr_nud}
\centering
\resizebox{.8\linewidth}{!}{
\begin{tabular}{l l cc cc cc cc cc}
\toprule
\multirow{2}{*}{Method}  & \multirow{2}{*}{Arch.} & \multicolumn{2}{c}{CIFAR10} & \multicolumn{2}{c}{CIFAR10+} & \multicolumn{2}{c}{CIFAR50+} & \multicolumn{2}{c}{SVHN} & \multicolumn{2}{c}{TinyImageNet} \\
\cmidrule(r){3-4}\cmidrule(r){5-6}\cmidrule(r){7-8}\cmidrule(r){9-10}\cmidrule(r){11-12}
& & ACC & AUC & ACC & AUC & ACC & AUC & ACC & AUC & ACC & AUC\\
\midrule 
SCE* \cite{hendrycks2016baseline} (ICLR'17) & VGG & - & 67.7 & - & 81.6 & - & 80.5 & - & 88.6 & - & 57.7  \\
OpenMax* \cite{bendale2016towards} (CVPR'16) & VGG & 80.1 & 69.5 & - & 81.7 & - & 79.6 & 94.7 & 89.4 & - & 57.6  \\
RPL* \cite{chen2020learning} (ECCV'20) & VGG & - & 82.7 & - & 84.2 & - & 83.2 & - & 93.4 & - & 68.8  \\
PROSER* \cite{zhou2021learning} (CVPR'21) & VGG & 92.6 & 89.1 & - & 96.0 & - & 95.3 & 96.4 & 94.3 & 52.1 & 69.3  \\
CPN* \cite{yang2020convolutional} (TPAMI'22) & VGG & 92.9 & 82.8 & - & 88.1 & - & 87.9 & 96.4 & 92.7 & - & 63.9  \\
ODL* \cite{liu2022orientational} (TPAMI'22) & VGG & - & 88.5 & - & 91.1 & - & 90.6 & - & 95.4 & - & 74.6 \\
Ours (combined) & VGG & \textbf{96.4} & \textbf{89.5} & \textbf{96.3} & \textbf{96.2} & \textbf{96.5} & \textbf{95.7} & \textbf{97.4} & \textbf{95.7} & \textbf{78.2} & \textbf{75.3} \\
%
\midrule
SCE (ICLR'17) & WRN & 92.1 & 76.5 & 94.0 & 84.7 & 94.0 & 83.8 & 97.0 & 92.1 & 65.8 & 66.1  \\
N-SCE & WRN & 93.7 & 76.8 & 93.7 & 85.3 & 93.7 & 84.4 & 97.1 & 91.8 & 64.5 & 66.4  \\
DOC \cite{shu2017doc} (EMNLP'17) & WRN & 91.6 & 78.0 & 93.9 & 88.2 & 93.9 & 88.1 & 97.0 & 93.5 & 59.6 & 65.5  \\
RPL (ECCV'20) & WRN & 94.7 & 82.0 & 95.9 & 91.1 & 96.1 & 90.9 & 97.5 & 93.4 & 71.4 & 70.4  \\
CPN (TPAMI'22) & WRN & 91.2 & 76.2 & 93.7 & 84.6 & 93.7 & 83.2 & 96.7 & 92.3 & 59.7 & 64.6  \\
Ours (combined) & WRN & \textbf{97.0} & \textbf{89.0} & \textbf{97.7} & \textbf{96.6} & \textbf{97.6} & \textbf{96.0} & \textbf{98.0} & \textbf{97.0} & \textbf{79.1} & \textbf{77.0}  \\
\midrule
CSSR* \cite{huang2022class} (TPAMI'22) & ResNet-18 & - & 91.3 & - & 96.3 & - & 96.2 & - & 97.9 & - & 82.3 \\
GoodOSR \cite{DBLP:journals/corr/abs-2110-06207} (ICLR'22) & ResNet-18 & \textbf{96.3} & 91.4 & \textbf{97.4} & 96.0 & \textbf{97.4} & 94.2 & 96.6 & 97.5 & 83.1 & 82.2   \\
Ours & ResNet-18 & 96.0 & \textbf{92.9} & 97.3 & \textbf{98.0} & 97.3 & \textbf{96.5} & \textbf{96.8} & \textbf{98.2} & \textbf{83.2} & \textbf{82.9} \\
\bottomrule
\end{tabular}
}
\end{table*}

\begin{table*}[t]
\caption{OSR performance under macro-averaged F1-score. * indicates that the values are taken from the references. `Arch.' states the backbone network used. The weight decay is applied in default.
}
\label{table:performance_osr_ood}
\centering
\resizebox{.7\linewidth}{!}{
\begin{tabular}{l l c cccc c}
\toprule
Method & Arch. & Param. & \shortstack{ImageNet-\\crop} & \shortstack{ImageNet-\\resize} & \shortstack{LSUN-\\crop} & \shortstack{LSUN-\\resize} & Avg.\\
\cmidrule{1-8}
SCE* \cite{hendrycks2016baseline} (ICLR'17) & VGG  & 1.1M & 63.9 & 65.3 & 64.2 & 64.7 & 64.5 \\
OpenMax* \cite{bendale2016towards} (CVPR'16) & VGG  & 1.1M & 66.0 & 68.4 & 65.7 & 66.8 & 66.7 \\
CROSR* \cite{yoshihashi2019classification} (CVPR'19) & VGG  & 1.1M & 72.1 & 73.5 & 72.0 & 74.9 & 73.1 \\
GFROSR* \cite{perera2020generative} (CVPR'20) & VGG  & 1.1M & 75.7 & 79.2 & 75.1 & 80.5 & 77.6 \\
PROSER* \cite{zhou2021learning} (CVPR'21) & VGG  & 1.1M & \textbf{84.9} & 82.4 & \textbf{86.7} & 85.6 & 84.9 \\
OvRN-CD* \cite{jang2022collective} (TNNLS'22)  & VGG & 1.1M & 83.5 & 82.5 & 84.6 & 83.9 & 83.6 \\
%
\rowcolor{finesky}
Ours & VGG  & 1.1M & 84.2 & \textbf{88.4} & 85.1 & \textbf{88.1} & \textbf{86.5} \\
\cmidrule{1-8}
SCE          & WRN-16-4 & 2.7M & 79.1 & 79.2 & 80.3 & 80.8 & 79.9 \\
\rowcolor{lightgray}
m-OvR          & WRN-16-4 & 2.7M & 80.5 & 79.8 & 79.2 & 81.2 & 80.2 \\
SCE  + A     & WRN-16-4 & 2.7M & 84.5 & 88.5 & 87.0 & 88.8 & 87.2 \\
\rowcolor{lightgray}
m-OvR + A      & WRN-16-4 & 2.7M & 87.4 & 89.0 & 89.0 & 90.0 & 88.9 \\
SCE + A + S & WRN-16-4 & 2.7M & 84.6 & 87.5 & 87.5 & 87.5 & 86.8 \\
\rowcolor{finesky}
m-OvR + A + S          & WRN-16-4 & 2.7M & \textbf{89.1} & \textbf{90.5} & \textbf{90.4} & \textbf{90.9} & \textbf{90.2} \\
\bottomrule
\end{tabular}
}
\end{table*}

\begin{table*}[t]
\caption{Ablation of our proposed model by analyzing its training components: the m-OvR loss, unit-normalization (N) of representations, the margin $m$ in the similarity computation, weight decay (W), data augmentation (A), and auxiliary self-supervision (S).
Models are evaluated in terms of unknown class detection performance (AUC), closed-set accuracy (ACC), and detection accuracy (DetACC). SCE substitutes in the absence of m-OvR.
}
\label{table:ablation}
\centering
\resizebox{.99\linewidth}{!}{
\begin{tabular}{c c | c|c|c|c|c|c |c  cccccc}
\toprule
& Model    & OvR         & N          & $m$             & W          & A          & S          & \shortstack{CIFAR10 \\ AUC/ACC/DetACC}      & \shortstack{CIFAR10+ \\ AUC/ACC/DetACC}      & \shortstack{CIFAR50+ \\ AUC/ACC/DetACC}     & \shortstack{SVHN \\ AUC/ACC/DetACC}         & \shortstack{TIN \\ AUC/ACC/DetACC}           & \shortstack{Avg. \\ AUC/ACC/DetACC}         \\
\cline{1-14}
& Baseline &            &            &                        &            &            &            & 76.5/92.1/70.8 & 84.7/94.1/78.9 & 83.9/94.1/78.1 & 92.1/97.1/86.1 & 66.1/65.8/62.8 & 80.7/88.6/75.3 \\
\cline{2-8}
& m-OvR & \checkmark & \checkmark & \checkmark &            &            &            & 79.3/91.6/72.8 & 90.1/93.9/82.6 & 89.8/94.1/82.1 & 93.8/97.1/87.6 & 66.2/62.3/62.7 & 83.9/87.8/77.6 \\
\cline{1-14}
\multirow{4}{*}{\shortstack{one\\out}}
& a        &            &            &                        & \checkmark & \checkmark & \checkmark & 85.0/96.1/78.1 & 90.4/96.8/84.1 & 89.3/96.9/82.6 & 94.4/97.7/88.7 & 74.7/77.8/70.0 & 86.7/93.1/80.7 \\
\cline{2-8}
& b        &            & \checkmark &                        & \checkmark & \checkmark & \checkmark & 83.9/95.9/77.0 & 92.9/97.0/85.8 & 91.7/97.1/85.0 & 95.6/97.7/88.6 & 75.0/77.0/69.8 & 87.8/92.9/81.2 \\
\cline{2-8}
& c        & \checkmark & \checkmark & \checkmark &             & \checkmark & \checkmark & 79.2/93.1/63.7 & 95.7/95.6/89.0 & 94.5/95.5/86.8 & 95.5/97.1/88.3 & 66.1/67.0/62.8 & 86.2/89.7/78.1 \\
\cline{2-8}
& d        & \checkmark & \checkmark & \checkmark & \checkmark &            & \checkmark & 81.7/95.1/77.5 & 92.2/95.7/86.5 & 90.6/95.7/85.1 & 87.1/97.6/88.4 & 65.6/66.4/63.8 & 83.4/90.1/80.3 \\
\cline{1-14}
\multirow{3}{*}{\shortstack{w/o\\A+S}}
& e        &            &            &                       & \checkmark &            &            & 81.6/94.1/75.7 & 87.7/95.6/81.3 & 87.1/95.8/80.7 & 94.4/97.7/89.3 & 70.1/67.8/65.4 & 84.3/90.2/78.5 \\
\cline{2-8}
& f        &            & \checkmark &                       & \checkmark &            &            & 79.9/93.4/73.8 & 86.8/95.0/80.7 & 85.0/94.9/78.7 & 92.7/97.3/87.0 & 67.3/67.0/63.6 & 82.3/89.6/76.7 \\
\cline{2-8}
& g        & \checkmark & \checkmark & \checkmark  & \checkmark &            &            & 80.3/93.8/74.1 & 89.6/95.2/82.8 & 88.6/95.0/81.4 & 92.8/97.4/86.3 & 68.9/65.4/64.7 & 84.0/89.4/77.9 \\
\cline{1-14}
\multirow{3}{*}{\shortstack{w/o\\S}}
& h        &            &            &                       & \checkmark & \checkmark &            & 83.2/95.9/76.5 & 91.1/97.1/84.6 & 90.6/97.0/84.1 & 94.1/97.5/87.7 & 75.6/76.5/69.9 & 86.9/92.8/80.5 \\
\cline{2-8}
& i        &            & \checkmark &                       & \checkmark & \checkmark &            & 84.6/96.1/77.0 & 95.6/97.2/88.8 & 94.6/97.2/86.2 & 96.6/97.9/90.5 & 76.8/77.8/70.8 & 89.6/93.2/82.7 \\
\cline{2-8}
& j        & \checkmark & \checkmark & \checkmark  & \checkmark & \checkmark &            & 84.6/96.1/77.0 & 95.6/97.2/88.8 & 94.6/97.2/86.2 & 96.6/97.9/90.5 & 76.8/77.8/70.8 & 89.6/93.2/82.7 \\
\cline{1-14}
\multirow{3}{*}{\shortstack{on\\$m$}}
& k        & \checkmark & \checkmark &                       & \checkmark & \checkmark & \checkmark & 85.0/96.4/78.6 & 92.9/97.2/86.3 & 92.4/97.2/85.4 & 95.8/98.0/89.9 & 76.8/79.2/71.3 & 88.6/93.6/82.3 \\
\cline{2-8}
& Ours    & \checkmark & \checkmark & \checkmark  & \checkmark & \checkmark & \checkmark & 87.7/97.2/80.0 & 96.6/97.8/89.6 & 96.0/97.6/88.1 & 97.0/98.0/91.5 & 77.0/79.1/71.0 & \textbf{90.9}/\textbf{93.9}/\textbf{84.0}  \\
\bottomrule
\end{tabular}
}
\end{table*}

\begin{figure*}[t]
\centering
\includegraphics[width=0.85\linewidth]{./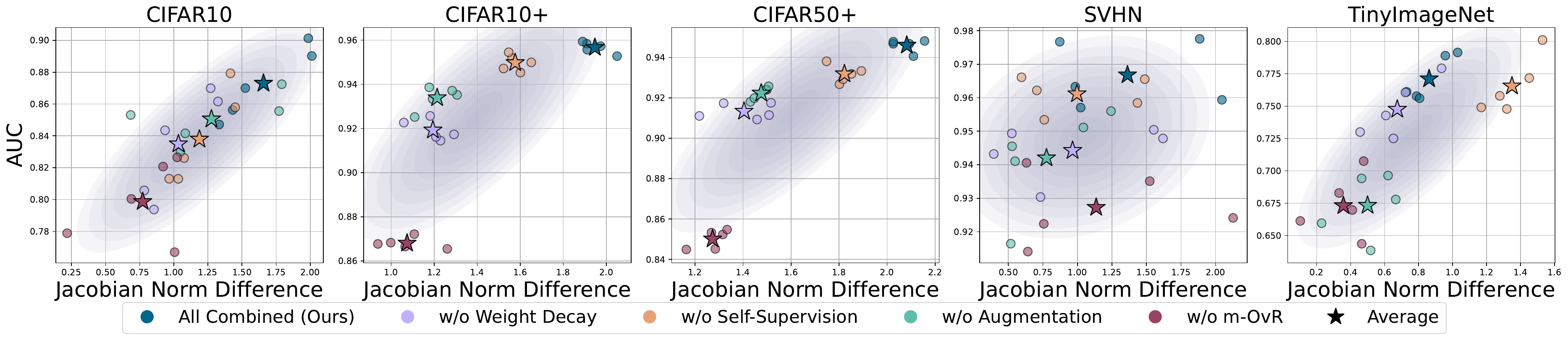}
\caption{The plot of Jacobian norm difference versus the unknown class detection performance (AUC). Each point corresponds to a distinct model trained over a different known set, following the protocol of \cite{neal2018open}.  Different colors indicate different methods. The plot shows the positive correlation, and the all combined model has the largest Jacobian norm differnce.
}
\label{fig:gnd_vs_auc}
\end{figure*}

The experiment section is outlined as follows: 
(1) We compare our method with other baseline OSR models for the unknown class detection task under two different widely-used protocols, Protocol A \cite{neal2018open} and Protocol B \cite{yoshihashi2019classification}.
(2) We conduct a careful ablation study of our method, analyzing each component in terms of the unknown class detection performance and the Jacobian norm. (3) We visualize and analyze the Jacobian norm of representation with respect to the metric distances in the representation space. To this end, we compare our proposed model with a baseline model trained with the bare SCE loss.

Our proposed model is trained with the m-OvR loss in all experiments below. Unless specified, we always include weight decay, data augmentation, and auxiliary self-supervision in our model. The default model hyperparameters are as follows: the scale term $T=32$, margin $m=0.5$, the auxiliary self-supervision coefficient $0.1$, and the weight decay $1e{-}3$. 

We consider three backbones to extract the representation:  WRN-16-4 \cite{zagoruyko2016wide}, VGG \cite{neal2018open}, and ResNet-18. For WRN-16-4 and VGG, our model is trained by SGD with $20$k training iterations unless specified otherwise. Its learning rate is regulated under a cosine scheduler, initiating from $0.1$ and decaying to $1e{-}5$. The batch size is 128. 
In the case of the ResNet-18 backbone, on the other hand, the model is trained for 200 epochs under the SGD optimizer with a momentum of 0.9 and a learning rate of 0.06 that decays to 0 by the cosine learning scheduler. 

In all experiments, the model is trained only with known classes so that the model never sees any unknown class sample during training.

\subsection{Performance Comparison - Protocol A}
\label{sec:exp_data}

\noindent
\textbf{Datasets-Protocol A.}
In this protocol \cite{neal2018open}, we use five different OSR datasets to compare different OSR methods in terms of the closed-set classification accuracy and unknown class detection performance.

Our method is evaluated for unknown class detection performance (AUC) and closed-set accuracy (ACC).
The protocol used in \cite{neal2018open} is adopted with the following benchmark datasets:

\begin{itemize}
\item CIFAR10 and SVHN: Among the total ten classes, $K{=}6$ classes are chosen as the known ones, regarding the rest as a single unknown class. CIFAR10 \cite{krizhevsky2009learning} consists of generic object images while SVHN \cite{netzer2011reading} of street view numbers.

\item CIFAR10+ and CIFAR50+: To make CIFAR10 more challenging, CIFAR10+ and CIFAR50+ are considered, in which $K{=}4$ known classes are selected from CIFAR10 while 10 (or 50) classes from CIFAR100 \cite{krizhevsky2009learning} constitute a single unknown class.

\item TinyImageNet: In TinyImagenet (TIN) \cite{le2015tiny} with more diverse categories, $K {=} 20$ classes constitutes the known, while the other 180 remaining ones form a single unknown class.
\end{itemize}

\noindent
\textbf{Results-Protocol A.}
The comparison results are given in Table \ref{table:performance_osr_nud}, which indicate that our proposed methodology is effective for OSR across different backbone architectures, including VGG, WRN, and ResNet-18.

A significant attribute of our methodology lies in the employment of our margin-based loss, m-OvR, which not only  optimizes intra-class compactness but also ensures inter-class separation by circumventing inter-class collapse, as detailed in Prop. \ref{prop:ovr_margin}. This aspect renders our work as an improvement over the prevailing techniques such as RPL, CPN, and OvRN-CD, which predominantly focus on the inter-class aspects alone.
Furthermore, our methodology incorporates carefully chosen subsidiary techniques, including weight decay, representation unit-normalization, and self-supervision through rotation prediction, which can efficaciously enhance OSR.

We note that our approach, even without the use of complex training tricks but solely utilizing the m-OvR loss, is comparable to the state-of-the-art GoodOSR. The pivotal differentiation lies in that GoodOSR boosts the OSR performance by excessive hyperparameter tuning and various cutting-edge training tricks, while ours is simply based on the loss function design.

\subsection{Performance Comparison - Protocol B}

\noindent
\textbf{Datasets-Protocol B.}
In this experiment, the model is trained over $K$ known classes and classifies $K{+}1$ where the $K+1$-th class is the unknown class. The protocol given in \cite{yoshihashi2019classification} is adopted.
For benchmarking, we use CIFAR10 classes as the known with $K{=}10$. The unknown class is either ImageNet \cite{russakovsky2015imagenet} or LSUN \cite{yu2015lsun} that comprises scenery images. They are resized or cropped, constituting ImageNet-crop, ImageNet-resize, LSUN-crop, or LSUN-resize.
Following the convention given in \cite{zhou2021learning,yoshihashi2019classification}, we choose the threshold $\tau$ for the inference score in Sec.~\ref{sec:model_score} so that $10\%$ of the validation set is detected as unknown class samples. The performance is evaluated using macro-averaged F1-score \cite{macro_f1}.

\noindent
\textbf{Results-Protocol B.}
The result in Table \ref{table:performance_osr_ood} shows that our proposed method outperforms all other baselines in the average performance. Under the WRN-16-4 architecture, m-OvR shows superiority over SCE, significantly more effective than SCE when applied with augmentation (A) and self-supervision (S). This is mainly due to 
the large Jacobian norm difference derived from the highly discriminative representations of the m-OvR (as observed in Fig.~\ref{fig:comp_loss}) triggers a strong separation between the known and unknown class representations.

\begin{figure}[t]
\centering
\includegraphics[width=.85\linewidth]{./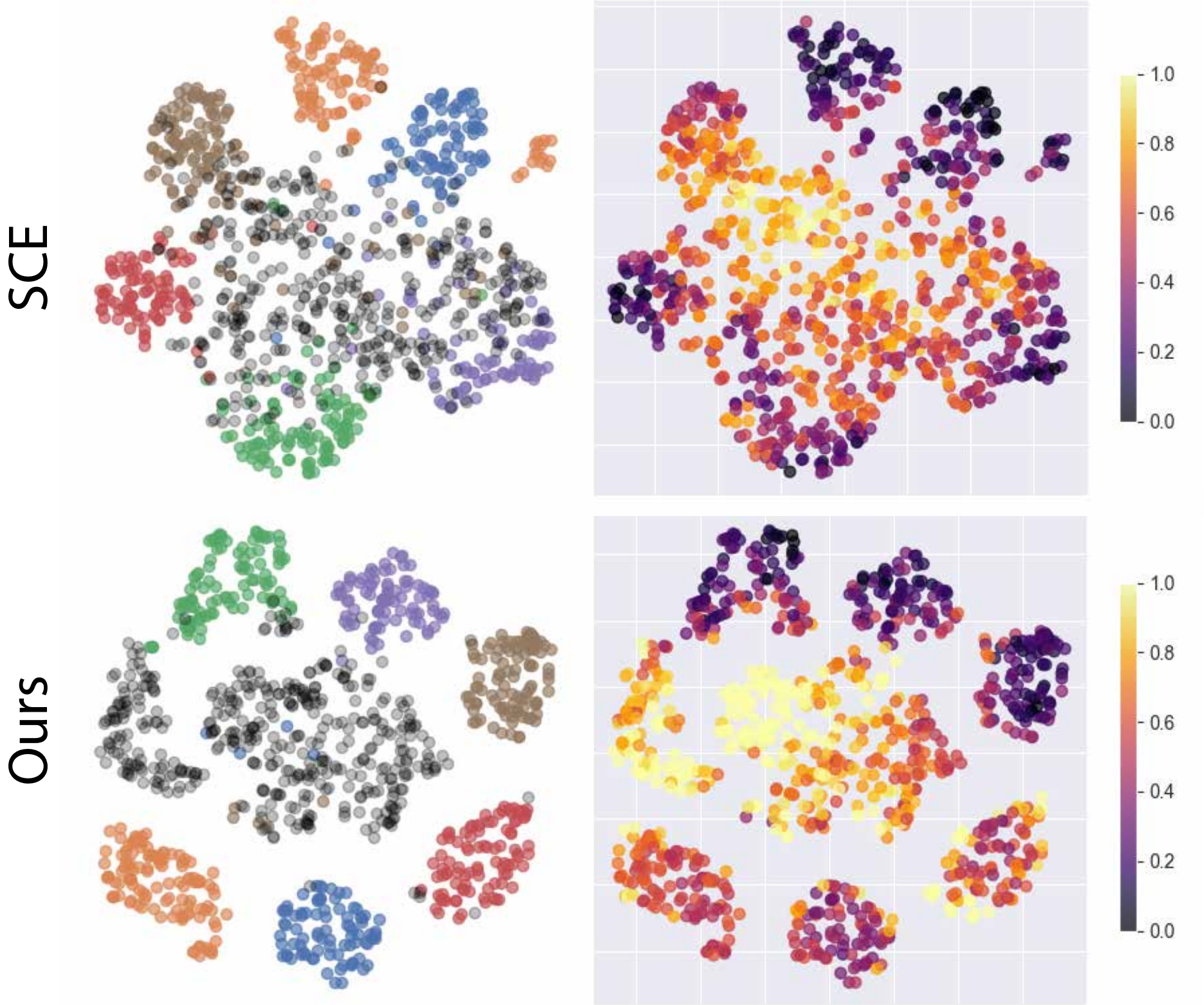}
\caption{The $2$-dimensional t-SNE \cite{van2009learning} visualization of $\boldsymbol{f}(\boldsymbol{x})$ trained on MNIST under the protocol of \cite{neal2018open}. In the left column, the black color denotes the unknown class. The temperature in the heat map (right column) indicates the (min-max normalized) Jacobian norm $\lVert \partial \boldsymbol{f} / \partial \boldsymbol{x} \rVert_F$. The figure shows that the larger the Jacobian norm difference between the known and unknown (i.e., the color contrast in the right column figures), the better the separation between the known and unknown.
}
\label{fig:heat}
\end{figure}

\subsection{Ablation Study}
\label{sec:exp_abl}

\noindent
\textbf{Ablation on Training Components.}
Each component in our model is more carefully evaluated in this experiment. For this purpose, we use the standard metrics used in OSR; namely, AUC for the unknown class detection performance, the closed-set accuracy (ACC), and detection accuracy (DetACC) \cite{lee2018simple}.
The second block in the row shows that the m-OvR loss outperforms the SCE loss by a large margin, even when there is no data augmentation (A), weight decay (W), and self-supervision (S).
The representation embedding normalization (N) improves the performance by preventing trivial increase of the Jacobian norm.
The third block in a row (`one out') of Table \ref{table:ablation} along with the model-j compares each component by removing one of them out, verifying the effectiveness of each in the entire model.  When the standard data augmentation is available (i.e.~the fifth block), m-OvR effectively utilizes the data, thus more effectively separating the known from the unknown than SCE. Finally, the sixth block analyzes the margin, which improves the effectiveness of the loss-based unknown class detector by resolving the prototype misalignment issue.

\noindent
\textbf{Ablation with Jacobian Norm Difference}
The scatter plot for each fixed dataset in Fig.~\ref{fig:gnd_vs_auc} shows that the degree of separation between the known and unknown class representations positively correlates to the Jacobian norm difference. The correlations in CIFAR10 and TinyImageNet are strong, while CIFAR10+ and CIFAR50+ exhibit some degree of nonlinearity. In SVHN, on the other hand, the correlation is comparatively weak due to the performance saturation. Moreover, this proves that the large Jacobian norm difference is not the only factor that captures distance separation between the known and unknown, as already remarked by Sec.~\ref{sec:theory_limit}).

\begin{figure}[t]
\centering
\subfloat[]{
\includegraphics[width=.45\linewidth]{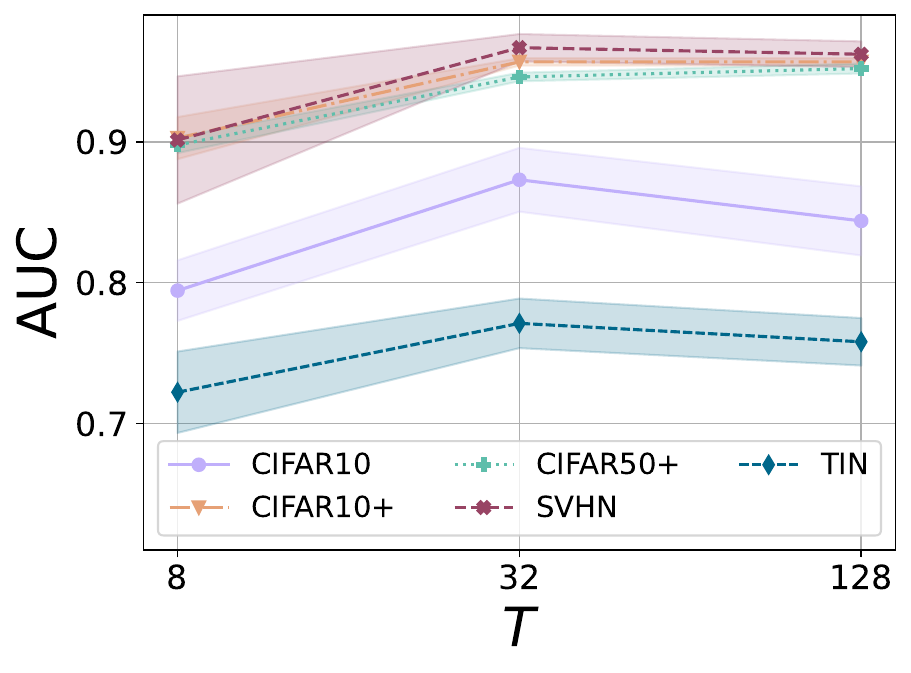}
}
\subfloat[]{
\includegraphics[width=.45\linewidth]{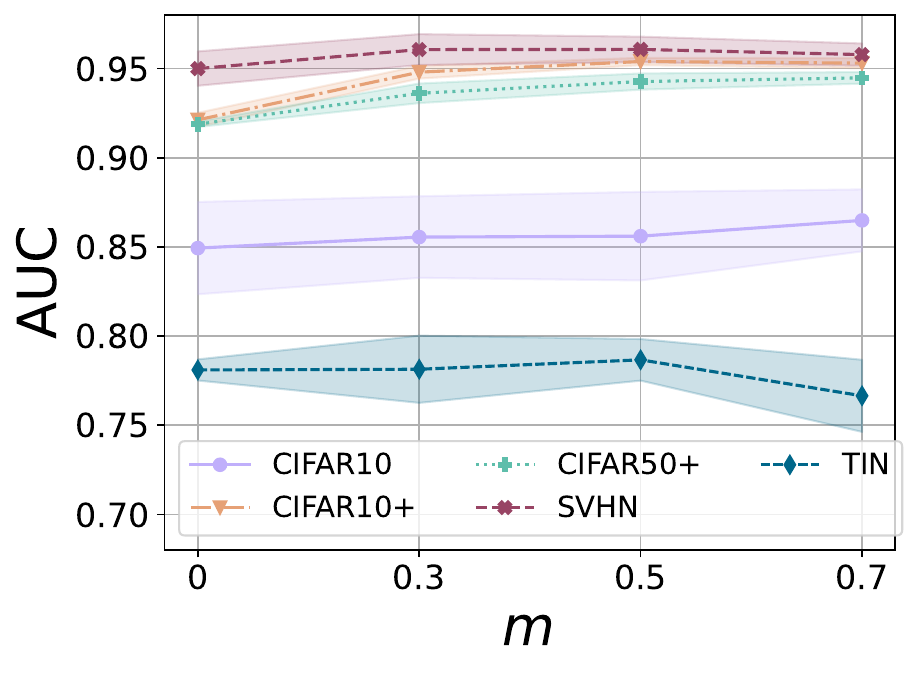}
}
\\
\subfloat[]{
\includegraphics[width=.45\linewidth]{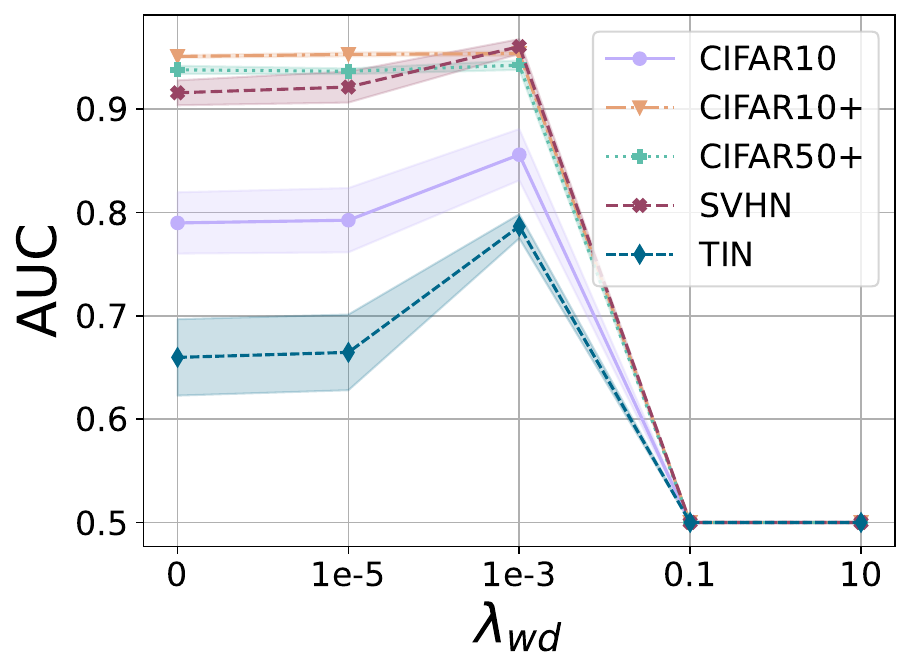}
}
\subfloat[]{
\includegraphics[width=.45\linewidth]{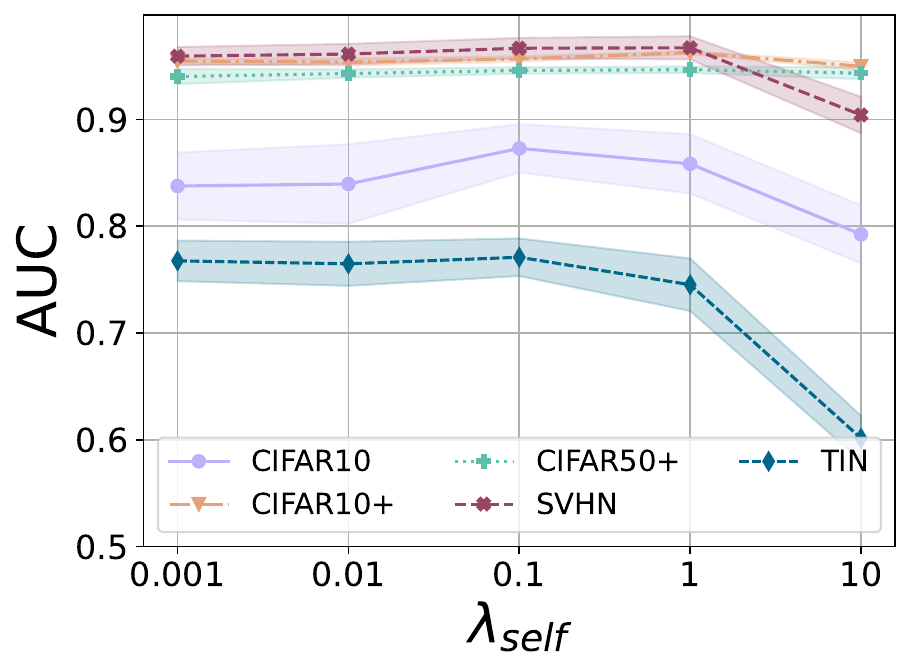}
}
\caption{
Unknown class detection performance (AUC) versus (a) the scale $T$, (b) the margin $m$, (c) the coefficient of the weight decay, and (d) the coefficient of the auxiliary self-supervision loss.
}
\label{fig:sub_hyp}
\end{figure}

\noindent
\textbf{Ablation on Model Hyperparameters.}
We analyze the hyperparameters of our overall model. Fig.~\ref{fig:sub_hyp} shows that the unknown class detection performance is robust for a sufficiently large scale term $T$, and the margin $m$ should not be too large. 

On the other hand, if the weight decay coefficient $\lambda_{wd}$ is overly large, then it collapses the embedding to a constant (i.e., zero vector). At the same time, overly small $\lambda_{wd}$ has no impact as a regularizer. Finally, we remark that selecting a proper coefficient for the weight decay is not tricky by observing the train loss dynamic during the early stage.

As already remarked in Sec.~\ref{sec:model_sub}, the rotation-based self-supervision auxiliary loss contributes positively only when its coefficient $\lambda_{self}$ is small (i.e., smaller than $1$). The unknown class detection performance is robust for the small values of $\lambda_{self}$.

\subsection{Visual Analysis of the Jacobian Norm of Representation}

In the 2-dimensional visualization of Fig.~\ref{fig:heat} obtained by applying t-SNE on the embedding representations of data samples, the known classes exhibit small Jacobian norm values while the unknown samples have larger Jacobian norm values. Moreover, the degree of distance-wise separation becomes high when the Jacobian norm contrast between the known and unknown classes is more vivid.

\section{Conclusion}
We have demonstrated that closed-set metric learning distinguishes the unknown from the known by causing their representations' Jacobian norm values to differ. Crucially, inter-class learning serves as the primary factor in this process, as it modifies the unknown class samples' representations without directly accessing them. Recognizing the significant role of inter-class learning in OSR, we developed a marginal one-vs-rest loss function designed to promote robust inter-class separation. By integrating this loss with other techniques that amplify the Jacobian norm disparity between known and unknown classes, we have successfully showcased the efficacy of our method on standard OSR benchmarks.

\paragraph{Acknowledgment}
\sloppy
This work was supported by the National Research Foundation of Korea (NRF) grant funded by the Korea government (MSIP) (NO. NRF-2022R1A2C1010710).


\bibliographystyle{elsarticle-num} 
\bibliography{refs}

\begin{thebibliography}{10}
\expandafter\ifx\csname url\endcsname\relax
  \def\url#1{\texttt{#1}}\fi
\expandafter\ifx\csname urlprefix\endcsname\relax\def\urlprefix{URL }\fi
\expandafter\ifx\csname href\endcsname\relax
  \def\href#1#2{#2} \def\path#1{#1}\fi

\bibitem{hendrycks2016baseline}
D.~Hendrycks, K.~Gimpel, A baseline for detecting misclassified and
  out-of-distribution examples in neural networks, arXiv preprint
  arXiv:1610.02136 (2016).

\bibitem{lee2018simple}
K.~Lee, K.~Lee, H.~Lee, J.~Shin, A simple unified framework for detecting
  out-of-distribution samples and adversarial attacks, Advances in neural
  information processing systems 31 (2018).

\bibitem{liang2017enhancing}
S.~Liang, Y.~Li, R.~Srikant, Enhancing the reliability of out-of-distribution
  image detection in neural networks, arXiv preprint arXiv:1706.02690 (2017).

\bibitem{geng2020recent}
C.~Geng, S.-j. Huang, S.~Chen, Recent advances in open set recognition: A
  survey, IEEE transactions on pattern analysis and machine intelligence
  (2020).

\bibitem{meinke2019towards}
A.~Meinke, M.~Hein, Towards neural networks that provably know when they don't
  know, arXiv preprint arXiv:1909.12180 (2019).

\bibitem{meinke2021provably}
A.~Meinke, J.~Bitterwolf, M.~Hein, Provably robust detection of
  out-of-distribution data (almost) for free, arXiv preprint arXiv:2106.04260
  (2021).

\bibitem{fang2021learning}
Z.~Fang, J.~Lu, A.~Liu, F.~Liu, G.~Zhang, Learning bounds for open-set
  learning, in: International Conference on Machine Learning, PMLR, 2021, pp.
  3122--3132.

\bibitem{valiant1984theory}
L.~G. Valiant, A theory of the learnable, Communications of the ACM 27~(11)
  (1984) 1134--1142.

\bibitem{liu2018open}
S.~Liu, R.~Garrepalli, T.~Dietterich, A.~Fern, D.~Hendrycks, Open category
  detection with pac guarantees, in: International Conference on Machine
  Learning, PMLR, 2018, pp. 3169--3178.

\bibitem{ben2007analysis}
S.~Ben-David, J.~Blitzer, K.~Crammer, F.~Pereira, et~al., Analysis of
  representations for domain adaptation, Advances in neural information
  processing systems 19 (2007) 137.

\bibitem{dhamija2018reducing}
A.~R. Dhamija, M.~G{\"u}nther, T.~E. Boult, Reducing network agnostophobia,
  arXiv preprint arXiv:1811.04110 (2018).

\bibitem{dietterich2022familiarity}
T.~G. Dietterich, A.~Guyer, The familiarity hypothesis: Explaining the behavior
  of deep open set methods, Pattern Recognition 132 (2022) 108931.

\bibitem{yang2021generalized}
J.~Yang, K.~Zhou, Y.~Li, Z.~Liu, Generalized out-of-distribution detection: A
  survey, arXiv preprint arXiv:2110.11334 (2021).

\bibitem{neal2018open}
L.~Neal, M.~Olson, X.~Fern, W.-K. Wong, F.~Li, Open set learning with
  counterfactual images, in: Proceedings of the European Conference on Computer
  Vision (ECCV), 2018, pp. 613--628.

\bibitem{DBLP:journals/corr/abs-2110-06207}
S.~Vaze, K.~Han, A.~Vedaldi, A.~Zisserman,
  \href{https://arxiv.org/abs/2110.06207}{Open-set recognition: {A} good
  closed-set classifier is all you need}, CoRR abs/2110.06207 (2021).
\newblock \href {http://arxiv.org/abs/2110.06207} {\path{arXiv:2110.06207}}.
\newline\urlprefix\url{https://arxiv.org/abs/2110.06207}

\bibitem{bendale2016towards}
A.~Bendale, T.~E. Boult, Towards open set deep networks, in: Proceedings of the
  IEEE conference on computer vision and pattern recognition, 2016, pp.
  1563--1572.

\bibitem{shu2017doc}
L.~Shu, H.~Xu, B.~Liu, Doc: Deep open classification of text documents, arXiv
  preprint arXiv:1709.08716 (2017).

\bibitem{chen2020learning}
G.~Chen, L.~Qiao, Y.~Shi, P.~Peng, J.~Li, T.~Huang, S.~Pu, Y.~Tian, Learning
  open set network with discriminative reciprocal points, in: Computer
  Vision--ECCV 2020: 16th European Conference, Glasgow, UK, August 23--28,
  2020, Proceedings, Part III 16, Springer, 2020, pp. 507--522.

\bibitem{chen2021adversarial}
G.~Chen, P.~Peng, X.~Wang, Y.~Tian, Adversarial reciprocal points learning for
  open set recognition, arXiv preprint arXiv:2103.00953 (2021).

\bibitem{yang2020convolutional}
H.-M. Yang, X.-Y. Zhang, F.~Yin, Q.~Yang, C.-L. Liu, Convolutional prototype
  network for open set recognition, IEEE Transactions on Pattern Analysis and
  Machine Intelligence (2020).

\bibitem{zhou2021learning}
D.-W. Zhou, H.-J. Ye, D.-C. Zhan, Learning placeholders for open-set
  recognition, in: Proceedings of the IEEE/CVF Conference on Computer Vision
  and Pattern Recognition, 2021, pp. 4401--4410.

\bibitem{zhang2017mixup}
H.~Zhang, M.~Cisse, Y.~N. Dauphin, D.~Lopez-Paz, mixup: Beyond empirical risk
  minimization, arXiv preprint arXiv:1710.09412 (2017).

\bibitem{verma2019manifold}
V.~Verma, A.~Lamb, C.~Beckham, A.~Najafi, I.~Mitliagkas, D.~Lopez-Paz,
  Y.~Bengio, Manifold mixup: Better representations by interpolating hidden
  states, in: International Conference on Machine Learning, PMLR, 2019, pp.
  6438--6447.

\bibitem{jang2022collective}
J.~Jang, C.~O. Kim, Collective decision of one-vs-rest networks for open-set
  recognition, IEEE Transactions on Neural Networks and Learning Systems
  (2022).

\bibitem{kasarla2022maximum}
T.~Kasarla, G.~J. Burghouts, M.~van Spengler, E.~van~der Pol, R.~Cucchiara,
  P.~Mettes, Maximum class separation as inductive bias in one matrix, arXiv
  preprint arXiv:2206.08704 (2022).

\bibitem{sokolic2017robust}
J.~Sokoli{\'c}, R.~Giryes, G.~Sapiro, M.~R. Rodrigues, Robust large margin deep
  neural networks, IEEE Transactions on Signal Processing 65~(16) (2017)
  4265--4280.

\bibitem{novak2018sensitivity}
R.~Novak, Y.~Bahri, D.~A. Abolafia, J.~Pennington, J.~Sohl-Dickstein,
  Sensitivity and generalization in neural networks: an empirical study, arXiv
  preprint arXiv:1802.08760 (2018).

\bibitem{varga2017gradient}
D.~Varga, A.~Csisz{\'a}rik, Z.~Zombori, Gradient regularization improves
  accuracy of discriminative models, arXiv preprint arXiv:1712.09936 (2017).

\bibitem{jakubovitz2018improving}
D.~Jakubovitz, R.~Giryes, Improving dnn robustness to adversarial attacks using
  jacobian regularization, in: Proceedings of the European Conference on
  Computer Vision (ECCV), 2018, pp. 514--529.

\bibitem{hoffman2019robust}
J.~Hoffman, D.~A. Roberts, S.~Yaida, Robust learning with jacobian
  regularization, arXiv preprint arXiv:1908.02729 (2019).

\bibitem{zhang2018three}
G.~Zhang, C.~Wang, B.~Xu, R.~Grosse, Three mechanisms of weight decay
  regularization, arXiv preprint arXiv:1810.12281 (2018).

\bibitem{bradley1997use}
A.~P. Bradley, The use of the area under the roc curve in the evaluation of
  machine learning algorithms, Pattern recognition 30~(7) (1997) 1145--1159.

\bibitem{davies1979cluster}
D.~L. Davies, D.~W. Bouldin, A cluster separation measure, IEEE transactions on
  pattern analysis and machine intelligence~(2) (1979) 224--227.

\bibitem{zhang2020deep}
D.~Zhang, Y.~Li, Z.~Zhang, Deep metric learning with spherical embedding,
  Advances in Neural Information Processing Systems 33 (2020).

\bibitem{hendrycks2019using}
D.~Hendrycks, M.~Mazeika, S.~Kadavath, D.~Song, Using self-supervised learning
  can improve model robustness and uncertainty, arXiv preprint arXiv:1906.12340
  (2019).

\bibitem{golan2018deep}
I.~Golan, R.~El-Yaniv, Deep anomaly detection using geometric transformations,
  Advances in neural information processing systems 31 (2018).

\bibitem{liu2022orientational}
Z.-g. Liu, Y.-m. Fu, Q.~Pan, Z.-w. Zhang, Orientational distribution learning
  with hierarchical spatial attention for open set recognition, IEEE
  Transactions on Pattern Analysis and Machine Intelligence (2022).

\bibitem{huang2022class}
H.~Huang, Y.~Wang, Q.~Hu, M.-M. Cheng, Class-specific semantic reconstruction
  for open set recognition, IEEE Transactions on Pattern Analysis and Machine
  Intelligence (2022).

\bibitem{yoshihashi2019classification}
R.~Yoshihashi, W.~Shao, R.~Kawakami, S.~You, M.~Iida, T.~Naemura,
  Classification-reconstruction learning for open-set recognition, in:
  Proceedings of the IEEE/CVF Conference on Computer Vision and Pattern
  Recognition, 2019, pp. 4016--4025.

\bibitem{perera2020generative}
P.~Perera, V.~I. Morariu, R.~Jain, V.~Manjunatha, C.~Wigington, V.~Ordonez,
  V.~M. Patel, Generative-discriminative feature representations for open-set
  recognition, in: Proceedings of the IEEE/CVF Conference on Computer Vision
  and Pattern Recognition, 2020, pp. 11814--11823.

\bibitem{zagoruyko2016wide}
S.~Zagoruyko, N.~Komodakis, Wide residual networks, arXiv preprint
  arXiv:1605.07146 (2016).

\bibitem{krizhevsky2009learning}
A.~Krizhevsky, G.~Hinton, et~al., Learning multiple layers of features from
  tiny images (2009).

\bibitem{netzer2011reading}
Y.~Netzer, T.~Wang, A.~Coates, A.~Bissacco, B.~Wu, A.~Y. Ng, Reading digits in
  natural images with unsupervised feature learning (2011).

\bibitem{le2015tiny}
Y.~Le, X.~Yang, Tiny imagenet visual recognition challenge, CS 231N 7~(7)
  (2015) 3.

\bibitem{russakovsky2015imagenet}
O.~Russakovsky, J.~Deng, H.~Su, J.~Krause, S.~Satheesh, S.~Ma, Z.~Huang,
  A.~Karpathy, A.~Khosla, M.~Bernstein, et~al., Imagenet large scale visual
  recognition challenge, International journal of computer vision 115~(3)
  (2015) 211--252.

\bibitem{yu2015lsun}
F.~Yu, A.~Seff, Y.~Zhang, S.~Song, T.~Funkhouser, J.~Xiao, Lsun: Construction
  of a large-scale image dataset using deep learning with humans in the loop,
  arXiv preprint arXiv:1506.03365 (2015).

\bibitem{macro_f1}
Y.~Sasaki, The truth of the f-measure, Teach Tutor Mater (01 2007).

\bibitem{van2009learning}
L.~Van Der~Maaten, Learning a parametric embedding by preserving local
  structure, in: Artificial Intelligence and Statistics, PMLR, 2009, pp.
  384--391.

\end{thebibliography}


%
%
%


\clearpage

\appendix

\section{Assumptions to the Theory}
\label{sec:assumption_theory}
For technical proofs of the theoretical parts, we assume the following:
Firstly, the known classes $C_i$ in the input space (e.g., image pixel space) follow the following regularity:

\begin{ass}
\label{ass:regular_class}
$\quad$
\begin{enumerate}
\item[(a)]
Each known class $C_k $ is a simple smooth, connected compact manifold with a nonzero volume $\Vol(C_k)$.
\item[(b)]
For any linear path $\boldsymbol{\gamma}: [0,1] \to \mathcal{X}$ from $C_j$ to $C_k$, there exist $t_1$ and $t_2$ such that $\boldsymbol{\gamma}([t_1, t_2]) \subseteq \mathcal{O}$ with $0< t_1 < t_2 < 1$
\end{enumerate}
\end{ass}

Note that the volume, denoted by $\Vol$, in Assumption \ref{ass:regular_class}a is the Lebesgue measure in the Euclidean space $\mathbb{R}^d$.
Assumption \ref{ass:regular_class}b indicates that the in-between part of the linear interpolation between different known classes is a part of an open set. This assumption is reasonable since interpolating in a very high-dimensional space, such as image pixel space, always induces meaningless inputs in the middle part of the interpolation.

Now, as to the representation embedding function $\boldsymbol{f}$, we restrict our consideration to the neural network family with the following regularity. 

\begin{ass}
\label{ass:regular_embedding}
$\quad$
\begin{enumerate}
\item[(a)]
$\boldsymbol{f}$ is a bounded smooth parametrized neural network (i.e., $\boldsymbol{f} = \boldsymbol{f}_\theta$ with a parameter $\theta$) with a sufficient complexity.

\item[(b)]
For any linear simple smooth path $\boldsymbol{\gamma}: [0,1] \to \mathbb{R}^{d_z}$ from $\boldsymbol{x}_j \in C_j$ to $\boldsymbol{x}_k \in C_k$, the inter-class distance maximization \textbf{strictly increases} the length of the path $\boldsymbol{f}(\boldsymbol{\gamma}([0,1]) \cap \mathcal{O})$ between $\boldsymbol{f}(\boldsymbol{x}_j)$ and $\boldsymbol{f}(\boldsymbol{x}_k)$.

\item[(c)]
For any simple smooth path $\boldsymbol{\gamma}:[0,1] \to \mathbb{R}^{d_z}$ from $\boldsymbol{x}_j \in C_j$ to $\boldsymbol{x}_k \in C_k$, 
the inter-class distance maximization does \textbf{not decrease} the length of any sub-path 
$\boldsymbol{f}(\boldsymbol{\gamma}([t_1,t_2]) \cap \mathcal{O})$ between $\boldsymbol{f}(\boldsymbol{x}_j)$ and $\boldsymbol{f}(\boldsymbol{x}_k)$ for any $0 \leq t_1 < t_2 \leq 1$


\end{enumerate}
\end{ass}

Assumption \ref{ass:regular_embedding}a is a standard regularity condition.
Assumption \ref{ass:regular_embedding}b means that the inter-class separation is effective on the linear interpolating path $\gamma$. Assumption \ref{ass:regular_embedding}b is visualized in Fig.~\ref{fig:concept}, which indicates that the length of projected (inter-class) path is strictly increased by metric learning.
Assumption \ref{ass:regular_embedding}c means that the inter-class distance maximization involves no contradictory behavior when the inter-class path is observed locally. The assumption is reasonable and based on the empirical evidence given in Fig.~\ref{fig:interpolate}.

In the below mathematical derivations, when we say that a quantity $Q(\boldsymbol{f})$ increases with respect to a function $\boldsymbol{f} : \mathbb{R}^{d} \to \mathbb{R}^{d_z}$, it formally means that there is a sequence $(\boldsymbol{f}^{(n)})_{n=0}^N$ of functions $\boldsymbol{f}^{(n)}: \mathbb{R}^d \to \mathbb{R}^{d_z}$ with $N \geq 1$ such that
\begin{equation}
Q(\boldsymbol{f}^{(n)}) \leq Q(\boldsymbol{f}^{(n+1)})
\end{equation}
for all $0 \leq n <N$.
In the case of a strict increment, the inequality is replaced by the strict one.
The decrement of $Q(\boldsymbol{f})$ is similarly defined.

Depending on the context, $Q$ may include the vectors $\boldsymbol{w}_k \in \mathbb{R}^{d_z}$ (that serve as representation prototypes in our work): $Q = Q(\boldsymbol{f}, \{\boldsymbol{w}_k\}_{k=1}^K)$.

\section{Proofs to the Theory}
\label{sec:proof_theory}

\begin{proof}[Proof of Proposition \ref{prop:collapse}]
Fix $C_k$. We prove a stronger result that 
\begin{equation}
\frac{\partial \boldsymbol{f}}{\partial \boldsymbol{x}} \to \mathbf{0} \in \mathbb{R}^{d_z \times d}
\end{equation}
for all $\boldsymbol{x} {\in} C_k$. In which case, 
$
\frac{d \boldsymbol{f}(\boldsymbol{\gamma}(t))}{dt} \to \textbf{0}
$
for all $t \in \boldsymbol{\gamma}^{-1}( \boldsymbol{\gamma}([0,1]) \cap C_k)$ since
$
\frac{d f_j ( \boldsymbol{\gamma} (t) )}{dt} = \sum_{i=1}^{n} \frac{\partial f_j}{\partial x_i}(\boldsymbol{\gamma}(t))  \gamma'_i(t)
$
for all $j$ where $\boldsymbol{x} = (x_1, \dots, x_d)$ and $\boldsymbol{f} = (f_1, \dots, f_{d_z})$.
Then, this implicates that the length of $\boldsymbol{f}(\boldsymbol{\gamma}([0,1]) \cap C_k)$ converges to $0$ (as the pointwise convergence guarantees the $L_p$ convergence when the functions are bounded).

Let $\boldsymbol{f}^{(n)} {:=} \boldsymbol{f}_{\theta^{(n)}}$ be a sequence that minimizes $\D(\boldsymbol{f}^{(n)}(\boldsymbol{x}), \boldsymbol{w}_k)$ to $0$ as $n {\to} N$. 
Let $ \boldsymbol{x} {\in} C_k$. Since the quantity at hand is a partial derivative, without loss of generality, assume $\boldsymbol{f}(\boldsymbol{x})=f(x)$ and $\boldsymbol{x}=x$ are scalar-valued (and also for $\boldsymbol{w}_k = w_k$). Fix $\epsilon > 0$. Then for some $\delta > 0$, we have
\begin{equation}
\left\lvert
\dfrac{d }{dx} f^{(n)}(x) - \left(
\dfrac{f^{(n)}(x+h) - f^{(n)}(x)}{h}
\right)
\right\rvert 
< \epsilon
\end{equation}
for all $h \in [-\delta, \delta] \setminus \{ 0 \}$. Taking $n \to N$, we obtain
\begin{equation}
\left\lvert
\lim_{n \to N} \dfrac{d }{dx} f^{(n)}(x) - 0
\right\rvert 
< \epsilon
\end{equation}
since $f^{(n)} \to w_k$.
The arbitrariness of $\epsilon$ concludes the proof.
\end{proof}

\begin{proof}[Proof of Theorem \ref{thm:grad_norm}]
The intra-class minimization part is proved in the proof of Proposition \ref{prop:collapse}.

For the inter-class maximization part, without loss of generality, redefine $\boldsymbol{f}$ such that $\boldsymbol{f}(\boldsymbol{x}) = 0$ for all $\boldsymbol{x} \in \mathcal{X} \setminus \mathcal{O}$, while to be the same as the original $\boldsymbol{f}$ over $\mathcal{O}$. Now, it suffices to prove the strict increment of $\int_{\mathcal{X}} \lVert \frac{\partial \boldsymbol{f} (\boldsymbol{x})}{\partial \boldsymbol{x}} \rVert_F \; d\boldsymbol{x}$ with respect to this $\boldsymbol{f}$. 

Note that Assumption \ref{ass:regular_embedding}c implies  that $\lVert \frac{\partial \boldsymbol{f} \circ \boldsymbol{\gamma}}{\partial t} \rVert_2$ as a function of $t$ is non-decreasing with respect to the changing $f$ for any simple smooth path $\gamma$. Hence, 
$\lVert \frac{\partial \boldsymbol{f}}{\partial x_l}(\boldsymbol{x}) \rVert_2^2$
is non-decreasing for any $l=1,\dots,d$. We use this property freely in the following. 

Since $\Vol(C_i) >0$ for all known classes $C_i$, for any pair of different known classes $C_i$ and $C_j$, we have a $(d{-}1)$-dimensional hyperplane $P \subseteq \mathcal{X}$ such that 
\begin{equation}
\Vol_{d-1} \left( \rho(C_i) \cap \rho(C_j) 
\right)>0
\end{equation}
where $\Vol_{d-1}$ is the $(d-1)$-dimensional volume, and $\rho(C_i)$ is the projection of $C_i$ to the hyperplane $P$. Since a coordinate change under rotation and translation does not change the volume integral of a function, we assume without loss of generality that 
\begin{equation}
\boldsymbol{e}_k \perp P;
\end{equation}
that is, $\boldsymbol{e}_k$ is perpendicular to $P$ where $\boldsymbol{e}_k$ is the $k$-th standard basis element of $\mathbb{R}^d$. 

Now, observe 
\begin{equation}
\int_{\mathcal{X}}
\left\lVert
\frac{\partial \boldsymbol{f} (\boldsymbol{x})}{\partial \boldsymbol{x}} 
\right\rVert_F^2
d\boldsymbol{x} 
=
\sum_{l=1}^d
\int_{\mathcal{X}}
\left\lVert
\frac{\partial \boldsymbol{f} (\boldsymbol{x})}{\partial x_l} 
\right\rVert_2^2
d\boldsymbol{x}.
\end{equation}
Since $
\int_{\mathcal{X}}
\lVert
\frac{\partial \boldsymbol{f} (\boldsymbol{x})}{\partial x_l} 
\rVert_2^2
\, d\boldsymbol{x}
$
is non-decreasing for $l \neq k$, it suffices to show that 
$
\int_{\mathcal{X}}
\lVert
\frac{\partial \boldsymbol{f} (\boldsymbol{x})}{\partial x_k} 
\rVert_2^2
\, d\boldsymbol{x}
$
is strictly increasing. Let 
\begin{equation}
R = \{ \widehat{\boldsymbol{x}}_k \in [-1,1]^{d-1} : \boldsymbol{x} \in \rho(C_i) \cap \rho(C_j)\}.
\end{equation}
where $\widehat{\boldsymbol{x}}_k$ denotes $\widehat{\boldsymbol{x}}_k {:=} (x_1, \dots, x_{k-1}, x_{k+1}, \dots, x_d) $ that removes the $k$-th element of $\boldsymbol{x}$.
Note that
\begin{multline}
\int_{\mathcal{X}}
\left\lVert
\frac{\partial \boldsymbol{f} (\boldsymbol{x})}{\partial x_k} 
\right\rVert_2^2
d\boldsymbol{x}
=
\int_{R}
\int_{-1}^1
\left\lVert
\frac{\partial \boldsymbol{f} (\boldsymbol{x})}{\partial x_k} 
\right\rVert_2^2
dx_k d\widehat{\boldsymbol{x}}_k
\\
+
\int_{R^c}
\int_{-1}^1
\left\lVert
\frac{\partial \boldsymbol{f} (\boldsymbol{x})}{\partial x_k} 
\right\rVert_2^2
dx_k d\widehat{\boldsymbol{x}}_k
\label{eq:proof01}
\end{multline}
where $d \widehat{\boldsymbol{x}}_k {:=} dx_1 \cdots dx_{k-1} dx_{k+1} \cdots dx_d$.
Since the second term on the RHS of the above equation is non-decreasing, we consider the first only, whose inner term
\begin{equation}
\int_{-1}^1
\lVert
\frac{\partial \boldsymbol{f} (\boldsymbol{x})}{\partial x_k} 
\rVert_2^2
\, dx_k.
\end{equation}
is decomposed into
\begin{equation}
\int_{-1}^a + \int_{a}^b + \int_b^1
\left\lVert
\frac{\partial \boldsymbol{f} (\boldsymbol{x})}{\partial x_k} 
\right\rVert_2^2
dx_k
\label{eq:proof02}
\end{equation}
where the scalars $a=a(\widehat{\boldsymbol{x}}_k)$ and $b=b(\widehat{\boldsymbol{x}}_k)$ are the infimum and supremum of $\{x_k : \boldsymbol{x} \in C_i \cup C_j \}$, respectively, with $\widehat{\boldsymbol{x}}_k \in R$. The first and third terms non-decrease, thus ignored. 
To compute the mid term
$
\int_{a}^b 
\lVert
\frac{\partial \boldsymbol{f} (\boldsymbol{x})}{\partial x_k} 
\rVert_2^2
dx_k
$, 
consider a path
\begin{equation}
\boldsymbol{\gamma}(t)
=
(x_1,\dots, x_{k-1}, a + (b-a)t, x_{k+1}, \dots, x_d)
\end{equation}
that depends on $\widehat{\boldsymbol{x}}_k = (x_1, \dots, x_{k-1}, x_{k+1}, \dots, x_d) {\in} R$. 
Then, $\boldsymbol{\gamma}$ is a path from $C_i$ to $C_j$ or the other way, and
\begin{alignat}{2}
\int_{a}^b 
\left\lVert
\frac{\partial \boldsymbol{f} (\boldsymbol{x})}{\partial x_k} 
\right\rVert_2^2
dx_k
& 
= (b-a)
\int_0^1
\left\lVert
\frac{d \boldsymbol{f} \circ \boldsymbol{\gamma} (t)}{dt} 
\right\rVert_2^2
dt
\\
& 
= 
(b-a)
\ell( \boldsymbol{f} \circ \boldsymbol{\gamma} )
\end{alignat}
where $\ell( \boldsymbol{f} \circ \boldsymbol{\gamma} ) = \int_0^1
\lVert
\frac{d \boldsymbol{f} \circ \boldsymbol{\gamma} (t)}{dt} 
\rVert_2^2
dt$.
In summary,
\begin{equation}
\label{eq:essence}
\int_{\mathcal{O}} \left\lVert \frac{\partial \boldsymbol{f}}{\partial \boldsymbol{x}}(\boldsymbol{x}) \right\rVert_2^2 d\boldsymbol{x}
= A(\boldsymbol{f}) 
+ 
\int_{R} [b(\widehat{\boldsymbol{x}}_k)-a(\widehat{\boldsymbol{x}}_k)] \ell(\boldsymbol{f} \circ \boldsymbol{\gamma}) d \widehat{\boldsymbol{x}}_k.
\end{equation}
Here, the inter-class maximization does not change $R$, $\widehat{\boldsymbol{x}}_k$, $\gamma$, $a(\widehat{\boldsymbol{x}}_k)$, and $b(\widehat{\boldsymbol{x}}_k)$. On the other hand, the inter-class maximization does not decrease the term $A(\boldsymbol{f})$. Moreover, note that $\gamma \cap \mathcal{O}$ is not empty and contains an interval by Assumption \ref{ass:regular_class}b. Thus, by Assumption \ref{ass:regular_embedding}b, the inter-class maximization strictly increases the term $\ell( \boldsymbol{f} \circ \boldsymbol{\gamma} )$, thereby strictly increasing the global integral of the Jacobian norm over the open set. This finishes the proof.
\end{proof}

\begin{proof}[Proof of Corollary \ref{cor:support}]
By the above theorem, the inter-class distance maximization increases the integral $\int_{\mathcal{O}} \lVert \frac{\partial \boldsymbol{f} (\boldsymbol{x})}{\partial \boldsymbol{x}} \rVert_F \; d\boldsymbol{x}$.
Now, the integral $\int_{\mathcal{O}} \lVert \frac{\partial \boldsymbol{f} (\boldsymbol{x})}{\partial \boldsymbol{x}} \rVert_F \; d\boldsymbol{x}$ can be decomposed into
\begin{equation}
\int_{\mathcal{O}}
\lVert
\tfrac{\partial \boldsymbol{f}}{\partial \boldsymbol{x}} (\boldsymbol{x})
\rVert_F
\;
d\boldsymbol{x}
=
\Vol(S) 
\cdot  
\underset{\boldsymbol{x} \sim S}{\mathbb{E}} [
\lVert
\tfrac{\partial \boldsymbol{f}}{\partial \boldsymbol{x}} (\boldsymbol{x})
\rVert_F
]
\end{equation}
where $\boldsymbol{x} \sim S$ is uniformly sampled
is the support set of the Jacobian norm over the open set $\mathcal{O}$. 
Based on the decomposition, the inter-class distance maximization increases the volume $\Vol(S)$ of the set $S$ and/or the expected Jacobian norm over $S$.
\end{proof}

\section{Proofs to the Method}
\label{sec:proof_method}

\begin{namedtheorem}[Propositions 6 and 7]
With the optimal prototypes $\{\boldsymbol{w}_k\}_{k=1}^K$ for a single sample reprsentation $\boldsymbol{f}(\boldsymbol{x})$ paired with label $y$, we have the collapse $\boldsymbol{w}_k {=} {-}\boldsymbol{w}_y$ for all $k {\neq} y$ if $m_n {=} 0$, while $\measuredangle(\boldsymbol{w}_k, {-}\boldsymbol{w}_y) {\geq} m_n $ with $k {\neq} y$ if $0 {<} m_n {<} \pi/2$.
\end{namedtheorem}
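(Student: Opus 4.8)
The plan is to reduce the whole statement to an elementary one‑variable optimization, using that the sample‑wise one‑vs‑rest loss of Eq.~\eqref{eq:ovr_loss}, which for a labeled sample $(\boldsymbol{x},y)$ is $\mathcal{L}(\boldsymbol{x},y) = -\log p(y\mid\boldsymbol{x}) - \sum_{k\neq y}\log\bigl(1-p(k\mid\boldsymbol{x})\bigr)$, \emph{decouples} across the prototypes. Write $\boldsymbol{f} := \boldsymbol{f}(\boldsymbol{x})$ and, for each $k$, $\theta_k := \measuredangle(\boldsymbol{w}_k,\boldsymbol{f}) = \arccos(\boldsymbol{w}_k\cdot\boldsymbol{f}) \in [0,\pi]$. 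Since the representation and all prototypes are unit‑normalized, $s_k = \cos(\theta_k+m)$ and $p(k\mid\boldsymbol{x}) = \sigma(Ts_k)$ with $T>0$, and the $k$‑th summand of $\mathcal{L}$ depends on $(\boldsymbol{w}_1,\dots,\boldsymbol{w}_K)$ only through $\boldsymbol{w}_k$. Hence minimizing $\mathcal{L}$ over the product of unit spheres amounts to minimizing each summand separately over $\theta_k\in[0,\pi]$.

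I would then solve each scalar subproblem. For $k=y$: $-\log\sigma(Ts_y)$ is strictly decreasing in $s_y$, so the term is minimized by making $s_y=\cos(\theta_y+m)$ maximal; as $\theta_y+m$ ranges over $[m,\pi+m]$ and $0\le m<\pi/2$ forces $\cos m > -\cos m = \cos(\pi+m)$, the unique maximizer is $\theta_y=0$, i.e.\ $\boldsymbol{w}_y=\boldsymbol{f}$. For $k\neq y$: $-\log\bigl(1-\sigma(Ts_k)\bigr)=-\log\sigma(-Ts_k)$ is strictly increasing in $s_k$, so the term is minimized by making $s_k$ minimal, i.e.\ by pushing $\theta_k+m$ as close to $\pi$ as the range $[m,\pi+m]$ allows; since $m<\pi/2$ gives $\pi\in[m,\pi+m]$, the minimizer is $\theta_k+m=\pi$, i.e.\ $\measuredangle(\boldsymbol{w}_k,\boldsymbol{f})=\pi-m$.

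Finally I would translate the angular conditions. From $\boldsymbol{w}_y=\boldsymbol{f}$ and the identity $\measuredangle(\boldsymbol{u},-\boldsymbol{v})=\pi-\measuredangle(\boldsymbol{u},\boldsymbol{v})$ for unit vectors, one gets $\measuredangle(\boldsymbol{w}_k,-\boldsymbol{w}_y)=\measuredangle(\boldsymbol{w}_k,-\boldsymbol{f})=\pi-(\pi-m)=m$ for every $k\neq y$. When $m=0$ this reads $\boldsymbol{w}_k=-\boldsymbol{f}=-\boldsymbol{w}_y$, forced for all $k\neq y$, which is the prototype collapse of Proposition~\ref{prop:ovr_collapse}; when $0<m<\pi/2$ it gives the separation $\measuredangle(\boldsymbol{w}_k,-\boldsymbol{w}_y)\ge m$ of Proposition~\ref{prop:ovr_margin}. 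I would add a short remark that for $m>0$ and $d_z\ge 3$ the minimizing $\boldsymbol{w}_k$ is pinned down only up to the half‑angle‑$m$ cone about $-\boldsymbol{w}_y$, which is exactly why the conclusion is phrased as the inequality ``$\ge m$'' rather than collapse.

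The main obstacle is bookkeeping rather than depth. First, the decoupling step must be justified carefully: it is the one‑vs‑rest structure that renders the per‑prototype subproblems independent — the contrast with the entangled $s_y,s_k$ of softmax cross‑entropy is noted in the remark preceding this theorem — and one should also check that each scalar infimum is attained so that ``optimal prototypes'' is well defined. Second, locating the extrema of $\theta\mapsto\cos(\theta+m)$ on the compact interval $[0,\pi]$ needs the right case split on $m$; this is precisely where the hypothesis $0\le m<\pi/2$ enters (keeping both $0$ and $\pi$ attainable by $\theta+m$ and ensuring $\cos m>0$), and it is also what distinguishes the literal collapse at $m=0$ from the strict angular gap at $m>0$.
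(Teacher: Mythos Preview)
Your proposal is correct and follows essentially the same approach as the paper's proof: both exploit the per-prototype decoupling of the one-vs-rest loss to reduce to maximizing $s_y$ and minimizing each $s_k$ separately, then read off $\boldsymbol{w}_y=\boldsymbol{f}(\boldsymbol{x})$ and $\measuredangle(\boldsymbol{w}_k,\boldsymbol{f}(\boldsymbol{x}))=\pi-m$, from which the collapse at $m=0$ and the angular gap at $m>0$ follow. Your version is simply more explicit about the monotonicity of $\theta\mapsto\cos(\theta+m)$ on $[0,\pi]$ and the role of the hypothesis $m<\pi/2$, whereas the paper's proof states the optimal $s_y,s_k$ values directly.
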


\begin{proof}[Proof of Propositions 6 and 7]
For the optimal prototypes $\{\boldsymbol{w}_k\}_{k=1}^K$, we have $s_y = \max_{\boldsymbol{w}_y} s_y$ and $s_k = -1$. Regardless of whether $m_n >0$ or not, we have $\boldsymbol{f}(\boldsymbol{x}) = \boldsymbol{w}_y$. For the negative pair, if $m_n=0$, then $\boldsymbol{f}(\boldsymbol{x}) = \boldsymbol{w}_k$, and hence $-\boldsymbol{w}_y = \boldsymbol{w}_k$ for all $k \neq y$. If $m_n > 0$, then $s_k = -1$ when the angle between $\boldsymbol{w}_k$ and $\boldsymbol{f}(\boldsymbol{x})$ is $\pi - m_n$, implying that $\measuredangle(\boldsymbol{w}_k, {-}\boldsymbol{w}_y) = \measuredangle(\boldsymbol{w}_k, {-}\boldsymbol{f}(\boldsymbol{x})) =  m_n $, finishing the proof.
\end{proof}

\begin{proof}[Proof of Proposition \ref{prop:movr_sce}]
Observe that
\begin{equation}
\frac{\partial \mathcal{L}_{\text{SCE}} }{\partial \theta} 
= - c (\sum_{j \neq y} e^{s_j - s_y}) \frac{\partial s_y}{\partial \theta}
+ c\sum_{k \neq y} e^{s_k-s_y} \frac{\partial s_k}{\partial \theta}
\end{equation}
where $\mathcal{L}_{\text{SCE}}=\log(1+ \sum_{k \neq y}e^{s_k-s_y})$ and $c = (1+ \sum_{j \neq y} e^{s_j - s_y})^{-1}$.
On the other hand,
\begin{equation}
\frac{L_{\text{m-OvR}}}{\partial \theta} 
= - [1 + e^{s_y}]^{-1} \frac{\partial s_y}{\partial \theta} 
+ \sum_{k \neq y} [1 + e^{s_k}]^{-1} \frac{\partial s_k}{\partial \theta}.
\end{equation}
Hence, the inter-class gradient $\frac{\partial s^{\text{SCE}}_k}{\partial \theta}$ for SCE is
\begin{equation}
\frac{\partial s^{\text{SCE}}_k}{\partial \theta}
= c e^{s_k-s_y} \frac{\partial s_k}{\partial \theta},
\end{equation}
while the inter-class gradient for $\frac{\partial s^{\text{m-OvR}}_k}{\partial \theta}$ is
\begin{equation}
\frac{\partial s^{\text{m-OvR}}_k}{\partial \theta}
= [1 + e^{s_k}]^{-1} \frac{\partial s_k}{\partial \theta}.
\end{equation}
To prove our claim, it suffices to show that $\frac{\partial s^{\text{m-OvR}}_k}{\partial \theta} > \frac{\partial s^{\text{SCE}}_k}{\partial \theta}$. To this end, observe that
\begin{equation}
\frac{e^{s_k - s_y}}{1 + \sum_{j \neq y} e^{s_j - s_y}}
<
\frac{1}{1 + e^{-s_k}},
\end{equation}
which is equivalent to 
\begin{equation}
e^{s_k} + 1 < \sum_j e^{s_j} = e^{s_k} + \sum_{j \neq k, y} e^{s_j} + e^{s_y},
\end{equation}
which holds if $s^{s_y} > 0$ due to $ \sum_{j \neq k, y} e^{s_j} \geq 0$. This completes our proof.
\end{proof}

\section{Additional Empirical Results}
\label{appendix:jacobian_softmax}

The results given in \ref{fig:iter_vs_metric_softmax} show that the Jacobian norm trend that we observed in the main sections holds the same way for the softmax cross entropy models.

\begin{figure*}[b]
\begin{center}
\includegraphics[width=.75\linewidth]{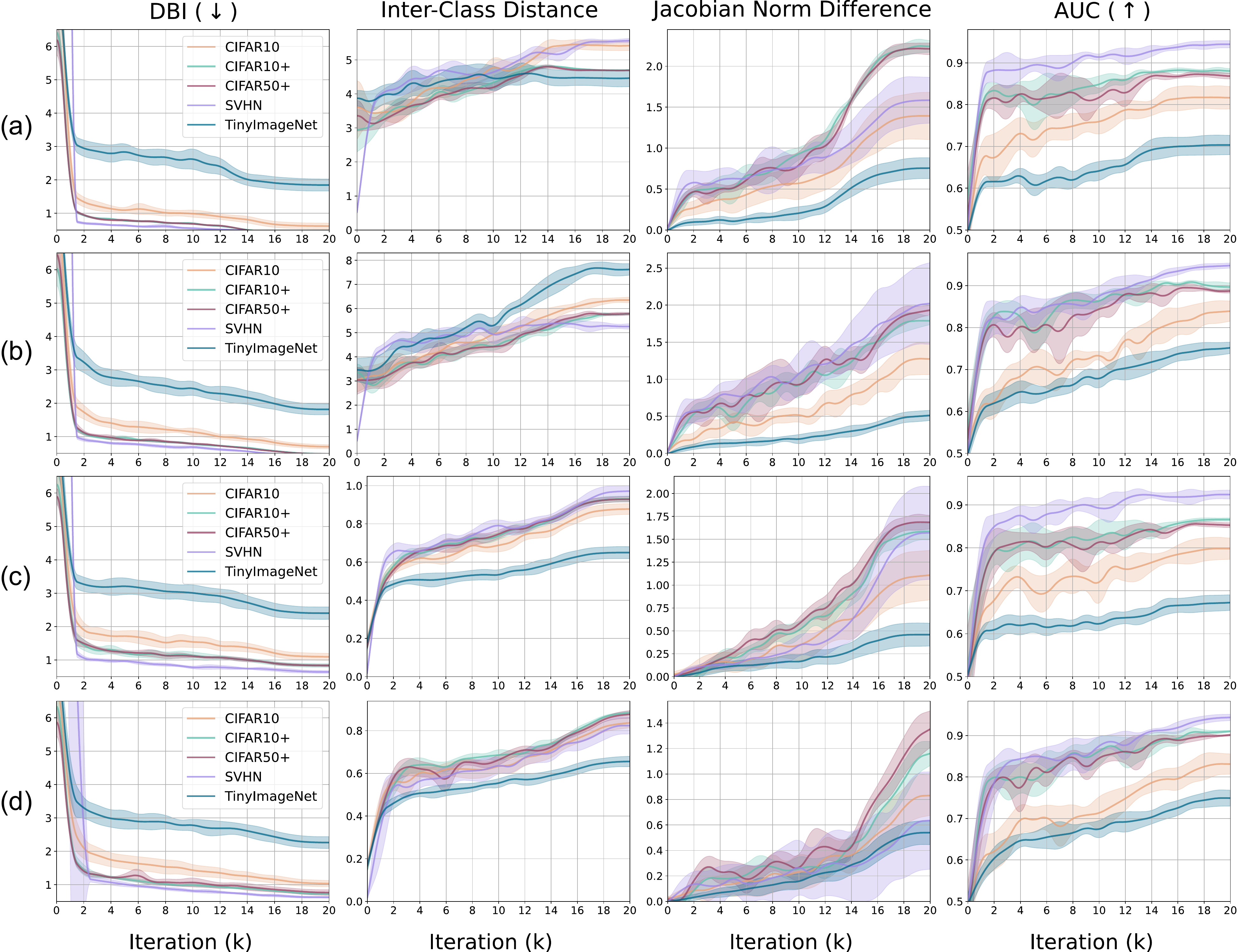}
\end{center}
\caption{
Several metrics measured while different discriminative models are begin trained. (a) SCE without data augmentation, (b) SCE with data augmentation, (c) SCE with normalized embedding but without data augmentation, (d) SCE with normalized embedding and data augmentation.
}
\label{fig:iter_vs_metric_softmax}
\end{figure*}

\end{document}